%% file: CDCP_TPAMI.tex
\documentclass[lettersize,journal]{IEEEtran}
\usepackage{amsmath,amsfonts}
\usepackage{array}
\usepackage{textcomp}
\usepackage{stfloats}
\usepackage{url}
\usepackage{verbatim}
\usepackage{graphicx}
\usepackage{cite}

\usepackage{multicol}
\usepackage{multirow}
\usepackage{mathtools} 
\usepackage{amsmath}
\usepackage{bm}
\usepackage{amssymb}
\usepackage{mathtools} 
\usepackage{amsthm}   %
\usepackage{multirow}
\usepackage{anyfontsize}
\usepackage{amsfonts}       
\usepackage{multicol}
\usepackage{multirow}
\usepackage{graphicx}

\newtheorem{theorem}{Theorem}[section]
\newtheorem{remark}[theorem]{Remark}
\newtheorem{proposition}[theorem]{Proposition}
\newtheorem{lemma}[theorem]{Lemma}

\usepackage[linesnumbered,ruled,vlined]{algorithm2e}
\usepackage{amsmath} 
\usepackage{dsfont}
\usepackage{subcaption}
\usepackage{booktabs}
\usepackage{colortbl}
\usepackage{xcolor}

\begin{document}

\title{CDCP: Conditional Diffusion Model with Contextual Prompts for Multi-task Offline Safe Reinforcement Learning}

\author{
        Jiayi Guan$^{1}$,
        Tianle Zhang$^{2}$,
        Li Shen$^{3}$,
        Ruiqi Zhang$^{1, 4}$,
        Ao Zhou$^{1}$,
        Lusong Li$^{2}$, 
        ~Guang Chen$^{1, 5,*}$,~\IEEEmembership{Member,~IEEE},\\
        Alois Knoll$^{5}$,~\IEEEmembership{Fellow,~IEEE},     
        Xiaodong He$^{2}$,~\IEEEmembership{Fellow,~IEEE},           
        and Changjun Jiang$^{1}$ 
\thanks{$^{*}$Corresponding author: Guang Chen (mail to: guangchen@tongji.edu.cn)}
\thanks{
Guang Chen, Jiayi Guan, Ruiqi Zhang, and Changjun Jiang are with Tongji University, Shanghai, China}
\thanks{Tianle Zhang, Lusong Li, and Xiaodong He are with JD Explore Academy, Beijing, China}
\thanks{
Li Shen is with the School of Cyber Science and Technology, Shenzhen Campus of Sun Yat-sen University, Shenzhen, China.}
\thanks{
Ruiqi Zhang is with the Berkeley AI Research Lab, University of California, Berkeley, United States.}
\thanks{
Guang Chen and Alois Knoll are with the Technical University of Munich, Germany.
}
}

\markboth{~}%
{Shell \MakeLowercase{\textit{et al.}}: A Sample Article Using IEEEtran.cls for IEEE Journals}


\maketitle


\input{sec/0_abstract}

\begin{IEEEkeywords}
Offline safe reinforcement learning, Multi-task safe reinforcement learning, Conditional generation model.
\end{IEEEkeywords}

\vspace{-0.2cm}
\section{Introduction}
\label{sec:introduction}
\IEEEPARstart{M} \noindent {ulti-task} reinforcement learning~(RL) promises to find a shared optimal policy that maximizes the rewards across all tasks~\cite{tomov2021multi,huang2022curriculum, NEURIPS2020_32cfdce9}. This paradigm is crucial for achieving general-purpose decision-making or controls~\cite{yu2020meta,xie2022lifelong,li2020multi, guan2022discrete, nie2018longitudinal}, and it has made remarkable achievements in fields such as robotic manipulation~\cite{NEURIPS2022_f376f5df, ze2023gnfactor}, autonomous driving~\cite{ishihara2021multi, parvini2023aoi}, and industrial applications~\cite{chen2021multi,hessel2019multi}. Although these advancements are notable, most previous successful cases are limited to maximizing the average reward across all tasks~\cite{vithayathil2020survey,zhang2021provably}. However, in real-world applications, it is not only necessary to maximize the rewards for all tasks but also to ensure that the agent meets safety constraints. For instance, in autonomous driving, it is essential to maximize speed and smoothness while avoiding dangerous driving and collisions~\cite{he2024toward, zhang2022residual, li2022metadrive}. Similarly, in robotic manipulation, the agent must efficiently and accurately perform tasks such as pushing, picking, grasping, and placing objects, while ensuring that it does not collide dangerously with the surrounding environment or objects~\cite{adjei2024safe,wachi2024safe}. To this end, multi-task offline safe RL aims to learn an optimal safe policy from offline data across multiple tasks. This paradigm ensures the safety of multi-task RL algorithms during training and deployment by utilizing offline learning and safety constraints. Consequently, it is crucial for policy optimization in multi-task scenarios with interaction risks.



Although multi-task offline safe RL algorithms are crucial for scenarios with high interaction costs and risks, this field still lacks sufficient research. Moreover, the triple difficulties of multi-tasks, safety constraints, and out-of-distribution~(OOD) actions present a new challenge for optimizing multi-task offline safe RL. On the one hand, within the paradigm of the offline safe RL, some previous work~\cite{xu2022constraints,guan2024poce} addresses the impact of OOD actions by reducing the extrapolation error of Q-values through pessimistic conservative estimation. Additionally, other studies~\cite{lee2022coptidice, polosky2022constrained} attempt to minimize the influence of OOD actions by leveraging stationary distributions. These methods have shown promising results in offline safe RL for single tasks. However, when extended to multi-task settings, the involvement of multiple tasks' Q-values and variational distributions can easily lead to the accumulation of estimation errors in conservative estimation and stationary distributions. The accumulation of errors makes it challenging for the algorithm to learn a policy that maximizes rewards based on Q-values. To address safety constraints, some existing work~\cite{guan2023voce} converts the safety constraint problem in offline safe RL tasks into an unconstrained problem using methods such as the Lagrangian multiplier or penalty function. These methods rely on accurate Q-values. When applied to the domain of multi-task safe RL, the estimation errors in the cost Q-values across multiple tasks make it challenging for the algorithm to guarantee the safety of the policy.


On the other hand, within the paradigm of multi-task RL, several studies have attempted to enhance the efficiency of simultaneous learning across multiple tasks through knowledge sharing~\cite{d2019sharing,varghese2021optimization,yu2021conservative} and contextual representation~\cite{sodhani2021multi}. Building on this foundation, some works~\cite{chen2020just,liu2021conflict} have adopted gradient normalization methods to adjust the combined gradient, mitigating interference among tasks during training. Additionally, sparse matrix techniques~\cite{chen2020just,d2019sharing} and parameter subspace methods~\cite{yang2020multi,sun2022paco,sarafian2021recomposing} have efficiently represented multiple tasks within limited parameter constraints. While these studies have significantly enhanced multi-task learning efficiency, mitigating task interference, and achieving efficient multi-task representation, most existing approaches rely on Q-value evaluations for policy updates. However, Q-values are sensitive to OOD actions and necessitate multiple-cost Q-values in multi-task offline safe RL tasks. This makes it difficult for existing multi-task RL algorithms to meet safety constraints and achieve the expected reward returns when extended to the multi-task offline safe RL paradigm. To summarize, while previous works have made notable progress in single-task and unconstrained multi-task paradigms, tackling the combined challenges of multi-task interference, OOD actions, and safety constraints in the multi-task offline safe RL remains a significant hurdle.

To address the challenges above, we propose a conditional diffusion model with contextual prompts for the multi-task offline safe RL. Concretely, we establish the objective for multi-task offline safe RL and transform the constrained optimization problem into an optimal safe action generation problem using the conditional diffusion model. Building on this, we design a novel cost-constraint strategy with classifier-free guidance, eliminating extrapolation errors caused by OOD actions through supervised learning, effectively addressing the cost-constraint issue in multi-task settings. Additionally, we propose a contextual prompting method to tackle multi-task representation problems. Furthermore, we introduce a gradient loss synchronization strategy that eliminates gradient interference among tasks by adaptively adjusting the weights of gradient losses. Finally, we conduct extensive comparative and ablation experiments on the CDCP in the \textit{MetaDrive} environment.
The main contributions are listed as follows:
\begin{itemize}
    \vspace{-0.1cm}
    \item To the best of our knowledge, we are the first to establish the objective of multi-task offline safe RL to solve the problem of learning a safe policy from offline data across multiple tasks. Moreover, we transform the constrained optimization problem into a conditional generation problem using a conditional diffusion model.
    \item We design a novel cost constraint strategy based on classifier-free guidance that eliminates extrapolation errors caused by OOD actions through supervised learning and generates safe action sequences for multiple tasks based on trajectory cost and reward conditions.
    \item We propose an innovative contextual prompting method to tackle the multi-task representation problem. This method enhances task representation accuracy by integrating textual descriptions with trajectory prompts while improving the algorithm's adaptability to unseen tasks.
    \item We introduce a gradient loss synchronization strategy that adaptively adjusts the weights of each task's gradient loss based on the gradient loss norm and decay rate. It eliminates gradient interference among tasks, thereby enhancing the stability of the algorithm's training process.
    \item Extensive comparative and ablation experiments demonstrate that the CDCP achieves higher reward returns than the previous state-of-the-art algorithms while ensuring safety. Furthermore, the experiments show that the CDCP provides a more flexible cost-constrained strategy.
\end{itemize}

The remainder of this paper is organized as follows. Section II reviews the related literature, covering offline safe RL, multi-task offline RL, and diffusion models. Section III provides background information on multi-task offline safe IV and rethinks the core requirements in real-world scenarios in multi-task domains. In Section V, we introduce the CDCP algorithm and its practical implementation. Section VI presents the experiments and analyzes the results. Finally, the conclusion is drawn in Section VII

\input{sec/4_related_work}

\input{sec/2_preliminaries}
\input{sec/3_method}
\input{sec/5_experimental}

\input{sec/6_conclusion}


\section*{Acknowledgments}
This work is supported by the National Natural Science Foundation of China (No. 62372329), in part by the National Key Research and Development Program of China (No.2021YFB2501104), in part by Shanghai Rising Star Program (No.21QC1400900), in part by Tongji-Qomolo Autonomous Driving Commercial Vehicle Joint Lab Project, and in part by Xiaomi Young Talents Program. I would like to thank Dr. Long Yang for his insightful discussions on the methodology.



\bibliographystyle{IEEEtran}
\bibliography{ref.bib}

\input{sec/7_supplement}


\vfill

\end{document}

%% file: sec/0_abstract.tex
\vspace{-0.4cm}
\begin{abstract}

Multi-task offline safe reinforcement learning~(RL) promises to learn a shared optimal safe policy from offline data across multiple tasks. This paradigm provides an effective means for the widespread application of RL in multi-task scenarios with high risk and interaction costs. However, the triple challenges of multi-tasking, safety constraints, and out-of-distribution~(OOD) actions pose a significant hurdle for existing methods to ensure safety while maximizing reward returns. In this work, we propose a Conditional Diffusion model with Contextual Prompts~(CDCP) to address these challenges. Concretely, we first rethink the requirements and challenges in current multi-task decision-making and control scenarios and establish the objectives of multi-task offline safe RL. Subsequently, we transform the multi-task constrained optimization problem into a conditional generation problem using the diffusion model. Based on this, we design a classifier-free guided cost-constraint strategy to provide flexible cost constraints and eliminate extrapolation errors from OOD actions via supervised learning. Additionally, we introduce a novel contextual prompting method to enhance multi-task representation accuracy and adaptability to unseen tasks. A gradient loss synchronization strategy is also introduced to eliminate gradient interference, improving training stability. Finally, extensive experiments demonstrate that the CDCP algorithm exhibits higher performance and safety in multi-task scenarios than the current state-of-the-art baseline methods. It meets different cost constraints without further training, providing a more flexible cost-constraint solution for the multi-task safe RL.


\end{abstract}


%% file: sec/4_related_work.tex
\section{Related Work}

In this section, we extensively discuss relevant works on multi-tasks offline safe RL. We mainly focus on two aspects: offline safe RL and multi-task RL.

\vspace{-0.4cm}
\subsection{Offline Safe RL}
Offline safe reinforcement learning optimizes policies that meet safety constraints using offline static data without further environmental interaction. This paradigm not only ensures the safety of policy deployment processes but also assures the safety of the policy training process~\cite{le2019batch,guan2023voce}. Consequently, it has attracted significant attention from scholars. After the proposal of the constraints penalized Q-learning algorithm (CPQ)~\cite{xu2022constraints}, numerous scholars have endeavored to address the optimization problem of safety policies under such offline settings by employing diverse methodologies. The VOCE~\cite{guan2023voce} algorithm combines variational inference with pessimistic conservative estimation to optimize safety-constrained policies from offline data. It shows promising results across samples with varying safety. Subsequently, the POCE~\cite{guan2024poce} algorithm addresses the optimization problem of RL policies with multiple safety constraints in offline settings by introducing a primal policy optimization method and proposing a conditional Bellman operator. Additionally, some studies attempt to address the optimization of safe policies from offline data using dynamic programming techniques from the perspective of stationary distributions~\cite{zhang2019gendice,dai2020coindice,lee2021optidice}. Among these, both COPO~\cite{polosky2022constrained} and CoptiDICE~\cite{lee2021coptidice} achieve the optimization of safe RL policies by solving for the stationary distribution of the optimal policy and transforming the constrained optimization problem into an unconstrained one using the Lagrange multiplier method. On the other hand, distinct from traditional temporal difference methods, some studies address the problem of constrained safety policy optimization in offline settings from a sequential decision-making perspective. For instance, both CDT~\cite{liu2023constrained} and Saformer~\cite{zhang2023saformer} utilize the transformer framework to reformulate the optimization of safety policies as a causal reasoning-based safe sequence decision-making problem, achieving promising results in various environments.
\vspace{-0.6cm}
\subsection{Multi-task RL}
Multi-task RL aims to learn a shared policy for multiple tasks. In practical scenarios, robots~\cite{kalashnikov2021scaling, shridhar2023perceiver} and autonomous vehicles~\cite{ishihara2021multi, liang2022effective} are required to manage a diverse range of control tasks and environments, underscoring the critical importance of multi-task RL for advancements in robotics and autonomous driving. To enhance the efficiency and stability of multi-task RL policy learning, some studies~\cite{borsa2016learning,li2020multi,cheng2023multi,varghese2021optimization,deramo2020sharing} address the problem of shared policy learning across multiple tasks through knowledge sharing or task representation methods. Notably, IMPALA~\cite{espeholt2018impala} and Multi-Goal RL~\cite{plappert2018multi}, inspired by the exploration evaluation framework, build on single-task RL algorithms such as A3C~\cite{mnih2016asynchronous} and DDPG~\cite{silver2014deterministic}. They achieve shared policy learning across multiple tasks by leveraging knowledge sharing to train multiple Q-value evaluation networks. Subsequently, CDS~\cite{yu2021conservative} refines data routing based on task-specific data, thereby implementing a conservative data-sharing strategy grounded in Q-value estimation. Additionally, some studies focus on enhancing multi-task RL policy training and improving model representational capacity by addressing gradient interference and parameter subspaces. In particular, CAGrad~\cite{liu2021conflict} and GradDrop~\cite{chen2020just} employ regularization methods to alleviate the maximum local loss for each task or utilize consistency strategies in sampling gradients at activation layers to adjust losses, thereby reducing gradient interference between tasks. To further enhance the multi-task representation capacity of the model, the PACO~\cite{sun2022paco} and MFQI~\cite{d2020sharing} algorithms learn an additional policy subspace or sparse matrix to represent multi-task policies. On the other hand, recent studies such as MDT~\cite{lee2022multi} and MTDiff~\cite{he2023diffusion} extend sequence modeling methods designed for single tasks to the multi-task RL domain, achieving promising results in various simulation scenarios.

In summary, our work differs from existing studies in two main aspects, representing our core challenges. First, we address the problem of optimizing a shared safety policy using only offline data from multiple tasks. Second, we employ a novel conditional generation approach instead of the temporal difference method to achieve multi-task safety policy optimization. To the best of our knowledge, this is the first attempt to optimize multi-task safety policies using offline data.

%% file: sec/2_preliminaries.tex
\section{Preliminaries}
In this section, we first introduce the fundamental concepts of offline safe RL, multi-task offline RL, and diffusion model. Subsequently, we rethink the core requirements and existing challenges in multi-task domains within real-world scenarios, clarifying the purpose and significance of this work.
\subsection{Offline Safe RL}
Constrained Markov Decision Processes~(CMDP) provide a theoretical framework to solve safe RL problems~\cite{cmdp}. It is defined as a tuple $(\mathcal{S},\mathcal{A}, \mathcal{P}, c, r, \rho_{0},\gamma)$, where $\mathcal{S}\in \mathbb{R}^{n}$ is the state space, $\mathcal{A} \in \mathbb{R}^m$ is the action space, $\mathcal{P}:\mathcal{S}\times\mathcal{A}\times\mathcal{S}\rightarrow[0,1]$ is the transition kernel, which specifies the transition probability $p(s_{t+1}|s_t,a_t)$ from state $s_t$ to state $s_{t+1}$ under the action $a_t$, ${r}:\mathcal{S}\times\mathcal{A}\rightarrow\mathbb{R}$ represents the reward function, $c$ is the set of costs $\{{c_{i}:\mathcal{S}\times\mathcal{A}\rightarrow \mathbb{R_{+}}},i=1,2,\cdots,m\}$ for violating $m$ constraints, $\gamma\in[0,1)$ is the discount factor, and $\rho_{0}:\mathcal{S}\rightarrow [0,1]$ is the distribution of initial states. The policy $\pi$ is a probability distribution mapping the state $s_t$ to the action $a_t$. We use shorthand $r_{t} =r(s_t,a_t)$ and $c_{i,t}=c_{i}(s_t,a_t)$ for simplicity. The common objective of constrained RL is to maximize the cumulative reward while satisfying the cumulative cost constraint.
\small
\begin{equation} 
    \label{equ0_1}
    \begin{aligned}     
     \!\!\!\pi^{*}\!=\!\arg \max_{\pi}\mathbb{E}_{\rho_{0},\pi}\left[\sum_{t=0}^{\infty}\gamma^{t}r_{t}\right], ~
     \mathrm{s.t.} ~ \mathbb{E}_{\rho_{0},\pi}\left[\sum_{t=0}^{\infty}\gamma^{t}c_{i,t}\right]\leq \bar{c}_i,
     \end{aligned}
\end{equation}\normalsize
\noindent where the $\bar{c}_i$ is the cost threshold of the $i$-th cumulative cost constraint.
Although the aforementioned online-constrained RL methods can ensure the agent's safety during the testing and deployment phases, their safety during the training process faces challenges. Offline safe RL is a method of learning a policy that satisfies cost constraints from offline datasets without interacting with the environment~\cite{polosky2022constrained, lee2021optidice}. This paradigm not only addresses the safety concerns during the testing and deployment phases but also ensures the safety of the training process.

\subsection{Multi-task Offline RL}
In multi-task RL, different tasks may possess distinct reward functions, state spaces, and transition matrices. Given a task $\tau\sim\mathcal{T}$, where $\mathcal{T}$ is a set of tasks, multi-task RL is typically defined based on the standard RL framework as $\{(\mathcal{S}^{\tau}, \mathcal{A}^{\tau},\mathcal{P}^{\mathcal{\tau}},r^{\tau},\rho_{0}^{\tau},\gamma)\}_{\tau}^{\mathcal{T}}$. The objective of multi-task RL differs from that of standard RL. It aims to find an optimal policy that maximizes the average reward across all tasks rather than the reward for a single task. Therefore, our multi-task RL objective is typically defined as:
\small
\begin{equation} 
    \label{equ3_2}
    \begin{aligned}     
     \pi^{*}=\arg \max_{\pi}\mathbb{E}_{\tau\sim\mathcal{T}}\mathbb{E}_{\rho_{0},\pi}\left[\sum_{t=0}^{\infty}\gamma^{t}r_{t}^{\tau}\right].
     \end{aligned}
\end{equation}\normalsize

In the offline setting, only the offline static datasets are accessible during training. Therefore, offline multi-task RL aims to learn a shared policy that maximizes the reward across all tasks using a static dataset collected by the unknown behavior policy $\pi_{\beta}^{\tau}$. Existing offline RL algorithms typically update policies by maximizing Q-values. However, OOD actions can easily lead to extrapolation errors in Q-value estimation in the offline setting~\cite{kumar2020conservative,guan2023uac}. Because multi-task offline RL involves the estimation of multiple Q-values, these extrapolation errors are more pronounced in multi-task scenarios. Therefore, addressing extrapolation errors caused by OOD actions is a significant challenge for multi-task RL.

\subsection{Diffusion Model}
Diffusion models are a typical type of generative model~\cite{song2019generative, ho2020denoising}. The data generation procedure in diffusion models is modeled as a predefined forward noise and a trainable reverse denoising process. The forward diffusion process follows a Markov chain, using a predefined variance schedule $\{\sigma_{1}, \sigma_{2},\cdots,\sigma_{K}\}$ to gradually add Gaussian noise to the samples, producing a series of noisy samples $\{x_{1},x_{2},\cdots, x_{K}\}$.
The forward diffusion process is defined as:
\vspace{-0.05cm}
\begin{equation} 
    \begin{aligned}
    &q(x_{k}|x_{k-1}) = \mathcal{N}(x_{k};\sqrt{1-\sigma_{k}}x_{k-1},\sigma_{k}I),\\
    \vspace{-0.05cm}
    &\quad \quad q(x_{1:K}|x_{0}) = \prod_{k=1}^{K}q(x_{k}|x_{k-1}),
    \end{aligned}
\end{equation}
\normalsize
\noindent where $K$ is the maximum number of diffusion steps. The reverse denoising process starts with Gaussian noise and gradually denoises to generate new samples by learning a trainable conditional distribution $p_{\theta}(x_{k-1}|x_{k})$. The denoising process is defined as:
\small
\vspace{-0.05cm}
\begin{equation} 
    \begin{aligned}
    & p_{\theta}(x_{k-1}|x_{k})=\mathcal{N}(x_{k-1};\mu(x_{k},k),\Sigma(x_{k},k)),\\
    \vspace{-0.05cm}
    & \quad \quad p_{\theta}(x_{0:K}) = p(x_{K})\prod_{k=1}^{K}p_{\theta}(x_{k-1}|x_{k}),   
    \vspace{-0.2cm}
    \end{aligned}
\end{equation}
\normalsize
\noindent where $p(x_{K})\!\! = \!\!\mathcal{N}(0,I)$ under the condition that $\prod_{k=1}^{K}(\!1-\!\sigma_{k}) \!\approx \! 0 $. Although diffusion models can be trained by optimizing the variational low bound of $\log p_{\theta}$~\cite{blei2017variational,kang2024efficient}, they ~\cite{ajay2022conditional,kang2024efficient} are generally trained using a simplified surrogate loss instead:
\begin{equation} 
    \begin{aligned}
        \label{equ_46}
        &\mathcal{L}(\theta)=\mathbb{E}_{k,\epsilon}\Big[\big{\Vert}\epsilon-\epsilon_{\theta}\big(x_{k}, k \big)\big{\Vert}^{2}\Big],\\  
    \end{aligned}
\end{equation}
\noindent where $\epsilon\!\sim\!\mathcal{N}(0,I)$ is Gaussian noise and $\epsilon_{\theta}$ represents the parameterized predicted noise. Given that $\mu(x_k,k)$ can be estimated from $\epsilon_{\theta}$~\cite{ho2020denoising}, this is equivalent to predicting the mean of $p_{\theta}$.

\subsection{Rethinking the requirements and challenges in Multi-task}
\label{pre: rethink}

We rethink the core requirements of multi-task reinforcement learning in real-world scenarios. In autonomous driving, we aim not only to maximize driving speed and smoothness across various traffic scenarios but also to mitigate risky driving behaviors and prevent traffic accidents. Similarly, in the domain of robotics manipulation, we not only need to consider efficiently completing various manipulation tasks to maximize rewards but also need to avoid risky collision behaviors. These tasks require maximizing the overall rewards of all tasks and considering the cost constraints for each task.

\begin{figure}[htp!]
    \centering
    \vspace{-0.2cm}
    \includegraphics[scale=0.285]{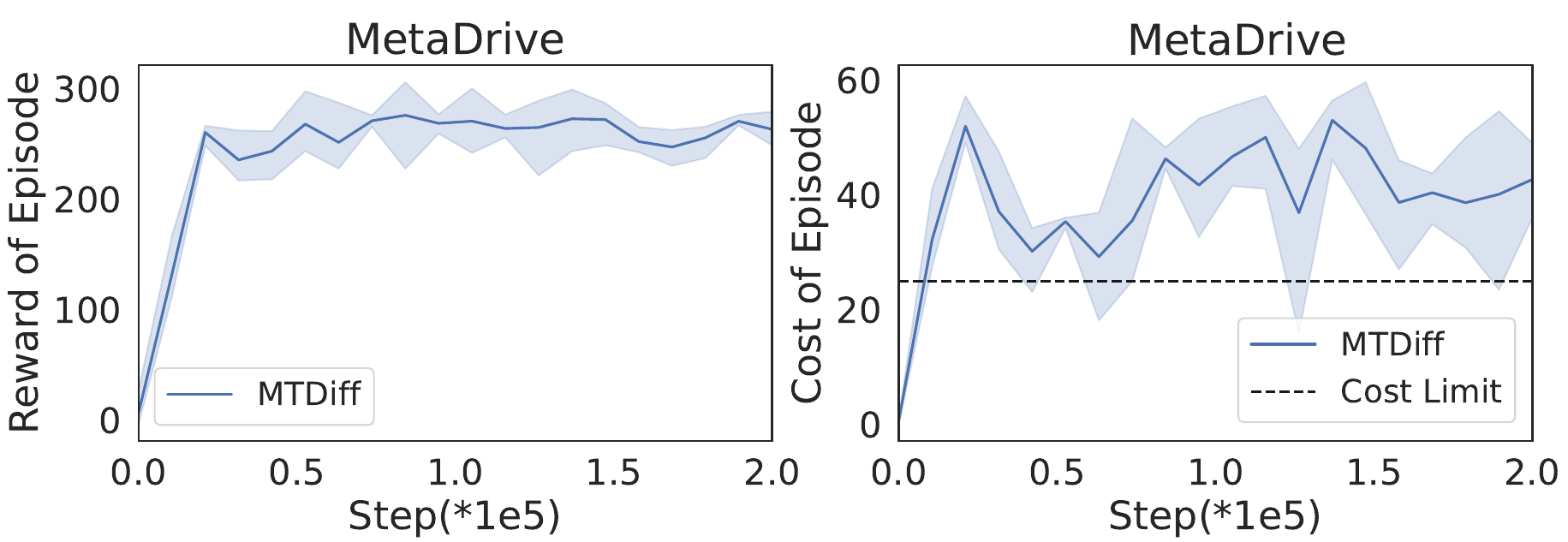}
    \caption{The mean reward and cost for the multi-task offline RL algorithm MTDiff across nine driving tasks, each with varying traffic scenarios or densities. The curves represent the test results from 3 random seeds during training, with the solid line indicating the mean of the three random seeds and the shaded area representing the standard deviation.}
    \label{fig:pre_mtrl}
\end{figure}
\begin{figure}[htp!]
    \centering
    \vspace{-0.30cm}
    \includegraphics[scale=0.25]{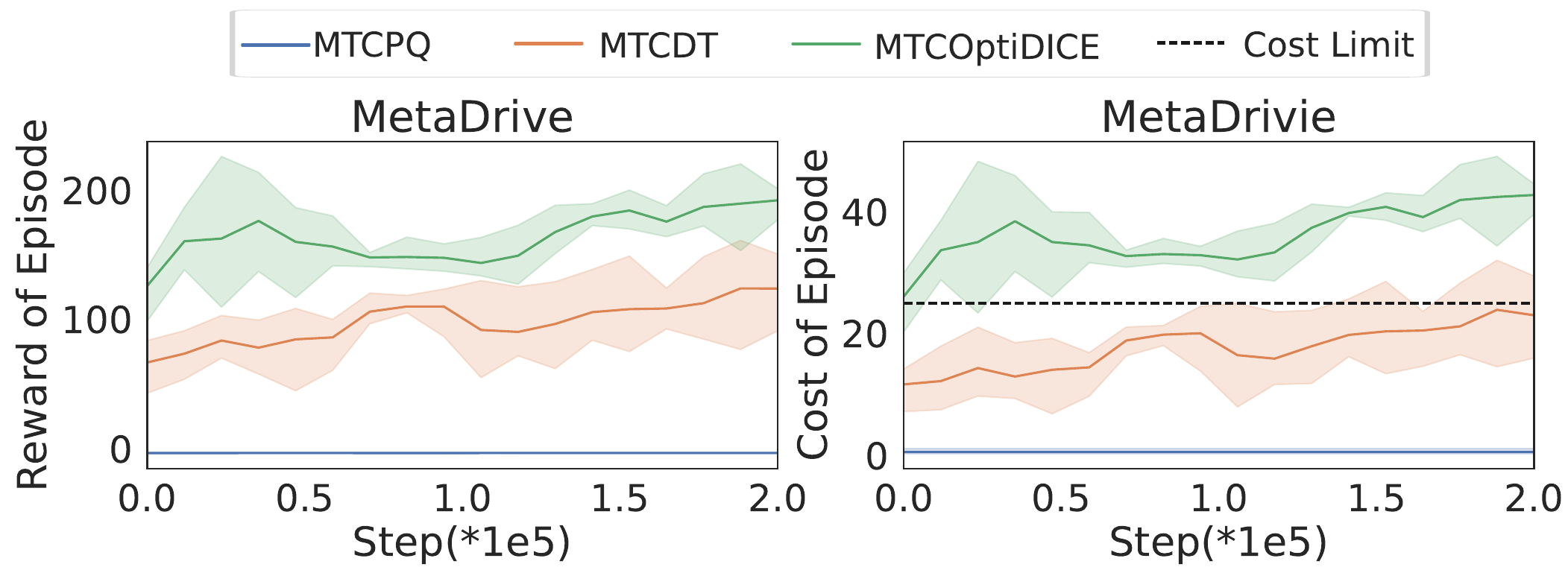}
    \caption{The average reward and cost of the algorithms that extend single-task offline safe RL to the multi-task offline safe RL. These results are across nine driving tasks with varying traffic scenarios or densities. The curves represent the test results from 3 random seeds during training, with the solid line indicating the mean of the 3 random seeds and the shaded area representing the standard deviation.}
    \vspace{-0.15cm}
    \label{fig:pre_osrl}
\end{figure}

We conduct experiments on state-of-the-art multi-task RL algorithms in the typical multi-task safety RL experimental environment \textit{MetaDrive}~\cite{li2022metadrive,li2021efficient,peng2022safe}. As shown in Fig~\ref{fig:pre_mtrl}, the experimental results indicate that although existing offline multi-task RL algorithms achieve high rewards, the costs far exceed the acceptable range for practical applications. The results suggest that while existing multi-task offline RL algorithms can maximize the rewards of each task, they struggle to ensure that the agent operates within safe constraints. Additionally, we extend state-of-the-art single-task offline safe RL algorithms to the multi-task offline safe RL domain and conduct experiments in the \textit{MetaDrive} environment. The experimental results, shown in Fig.~\ref{fig:pre_osrl}, indicate that although some algorithms adapted from single-task to multi-task domains can ensure costs meet the constraints, the reward returns are significantly reduced. Moreover, some algorithms still exhibit costs that far exceed the cost threshold. The results and analysis above indicate that while existing multi-task offline RL algorithms can maximize the reward return for each task, they struggle to ensure that costs meet the constraints. Furthermore, although some multi-task offline safe RL algorithms extended from single-task methods can satisfy cost constraints, their reward returns are significantly lower than expected, and some algorithms still fail to ensure cost constraints.

In summary, existing multi-task offline RL algorithms struggle to meet safety constraints in multi-task scenarios. Furthermore, while some multi-task offline safe RL algorithms extended from single-task methods can ensure safety constraints, their reward returns are significantly lower than expected. Therefore, we propose a novel paradigm of multi-task offline safe RL to address the challenge of learning a shared safety policy from offline in multi-task scenarios.

%% file: sec/3_method.tex
\section{Method}
This section elaborates on the conditional diffusion model with contextual prompts algorithm for multi-task offline safe RL. Initially, we define the objectives of multi-task offline safe RL and design a conditional generation model to optimize safety policies across multiple tasks. Subsequently, we propose a contextual prompting method to tackle task representation issues. Building on this, we introduce a gradient loss synchronization strategy to mitigate gradient interference among multiple tasks. Finally, we demonstrate the algorithm's application process through practical instances.



\subsection{Problem Setup}
\label{cmtrl}

Based on the discussion and analysis in Section~\ref{pre: rethink}, it is evident that existing multi-task offline safe RL algorithms face challenges in learning a shared safety policy from offline datasets of multiple tasks. To address this challenge, we propose a multi-task offline safe RL paradigm aimed at learning a policy that satisfies safety constraints while maximizing the average reward across all tasks from multiple offline datasets. Considering that multi-task offline safe RL aims to maximize the average reward across all tasks while ensuring that the costs for all tasks meet safety constraints, we define the multi-task RL paradigm as presented in Definition~\ref{def-1}.

\newtheorem{myDef}{\bf{Definition}}                                                             
\begin{myDef}
    \label{def-1} 
    \textit{(MTOS) Given a task $\tau\sim\mathcal{T}$, where $\mathcal{T}$ is a set of tasks, Multi-task Offline Safe RL~(MTOS) is defined as a tuple $\{(\mathcal{S}^{\tau},\mathcal{A}^{\tau}, \mathcal{P}^{\tau}, r^{\tau}, c_{i}^{\tau},\rho_{0}^{\tau},\gamma)\}_{\tau}^{\mathcal{T}}$ with the objective written as:}
    \begin{equation} 
        \label{equ3_3}
        \begin{aligned}
        & \pi^{*}=\arg\max_{\pi}\mathbb{E}_{\tau\sim\mathcal{T}}\mathbb{E}_{\rho_{0},\pi}\left[\sum_{t=0}^{\infty}\gamma^{t}r_{t}^{\tau}\right],\\
        &\mathrm{s.t.} \forall ~ \tau\sim\mathcal{T}, ~\mathbb{E}_{\rho_{0},\pi}\left[\sum_{t=0}^{\infty}\gamma^{t}c_{i,t}^{\tau}\right]\leq \bar{c}_{i}^{\tau}, \\
        & ~ ~\quad \forall ~ \tau\sim\mathcal{T}, (s_{t}^{\tau},a_{t}^{\tau},r_{t}^{\tau},c_{i,t}^{\tau})\in \mathcal{D}^{\tau},
        \end{aligned}
    \end{equation}
\noindent where $c_{i}^{\tau}$ is the threshold for the $i$-th cost constraint of task $\tau$, and $\mathcal{D}^{\tau}$ is the offline dataset for task $\tau$. $r_{t}^{\tau}$ and $c_{i,t}^{\tau}$ are shorthand for $r_{t}^{\tau} = r(s_{t}^{\tau},a_{t}^{\tau})$ and $c_{i,t}^{\tau} = r(s_{t}^{\tau},a_{t}^{\tau})$.
\end{myDef}

To address the multi-task offline safe RL problem defined in Definition~\ref{def-1}, we introduce contextual representation information to better distinguish and characterize each task's features. Therefore, multi-task offline safe RL is extended to contextual multi-task offline safe RL. Since contextual multi-task offline safe RL objective is consistent with those of multi-task offline safe RL, we define contextual multi-task offline safe RL as shown in Definition~\ref{def-2}.     
\begin{myDef}
    \label{def-2} 
    \textit{(CMTOS) Given a task $\tau\!\!\sim\!\!\mathcal{T}$, where $\mathcal{T}$ is a set of tasks, Contextual Multi-task Offline Safe RL~(CMTOS) is defined as a tuple $\{(\mathcal{S}^{\tau}\!,\mathcal{A}^{\tau}\!, \mathcal{P}^{\tau}\!, r^{\tau}\!, c_{i}^{\tau}\!,\rho_{0}^{\tau},\gamma,\mathcal{M}(\tau))\}_{\tau}^{\mathcal{T}}$ with the objective written as Eq.~(\ref{equ3_3}). Where $\mathcal{M}$ is a function that maps the contextual information of task $\tau$ to the MDP parameters.}
\end{myDef}

\subsection{Conditional Generation and Cost Constraint Strategy}
In response to the constrained optimization problem defined in Definition~\ref{def-1} and~\ref{def-2}, employing traditional temporal-difference methods directly for policy evaluation may lead to significant extrapolation errors due to the influence of OOD actions in the offline dataset. Furthermore, in multi-task scenarios, temporal-difference methods require evaluating Q-values for rewards and costs across multiple tasks, further exacerbating the impact of extrapolation errors on the policy. To capture the multimode distribution of safe trajectories across multi-tasks, we transform the multi-task safe policy optimization problem defined in Definition~\ref{def-1} and~\ref{def-2} into a conditional generative model using the diffusion model.

\begin{lemma}
\label{lem: objective}
The objective of multi-task offline safe RL is defined through a conditional generation model as follows:
\begin{equation} 
    \begin{aligned}
        \label{equ_13}       
        \max_{\theta}\mathbb{E}_{\tau_{c}\sim (U_{\tau\sim\mathcal{T}}D_{\tau})}\log p_{\theta}\big[x_{0}(\tau_{c})|y(\tau_{c},\tau)\big],
    \end{aligned}
\end{equation}
\noindent where $x_{0}(\tau_{c})$ is the expected sequence under trajectory $\tau_{c}$, and $y(\tau_{c},\tau)$ is the generation condition task $\tau$ and trajectory $\tau_{c}$. The conditional generation objective aims to estimate the conditional distribution $p_{\theta}$ to generate the expected sequence $x_{0}(\tau_{c})$ from the condition $y(\tau_{c},\tau)$.
\end{lemma}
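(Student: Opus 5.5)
The plan is to read the lemma as the claim that maximizing the conditional log-likelihood in Eq.~(\ref{equ_13}) yields a policy solving the CMTOS objective of Definition~\ref{def-2}. I will show this in three moves: reformulate the constrained problem as a return-conditioned behavior-cloning problem, identify the optimal policy with a conditional trajectory distribution, and show that the diffusion surrogate loss maximizes the corresponding likelihood.

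\emph{Step 1: reformulation.} Start from Eq.~(\ref{equ3_3}) under the offline restriction $(s_t^\tau,a_t^\tau)\in\mathcal{D}^\tau$. The feasible policies are then those whose trajectory distribution is supported on $\bigcup_{\tau}\mathcal{D}^\tau$. For each task $\tau$ and each trajectory $\tau_c$, define the condition $y(\tau_c,\tau)$ to contain the discounted reward return $R(\tau_c)=\sum_t\gamma^t r_t^\tau$, the cost returns $C_i(\tau_c)=\sum_t\gamma^t c_{i,t}^\tau$, and the context $\mathcal{M}(\tau)$. Consider a policy that generates $x_0$ by drawing from $p(x_0\mid y)$ with $y$ chosen so that $C_i\le\bar c_i^\tau$ and $R$ is maximal among the in-support trajectories satisfying the constraint. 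Such a policy attains the constrained maximum over in-support trajectories, task by task. Averaging over $\tau\sim\mathcal{T}$ then gives the multi-task objective, because the constraints decouple per task while the reward is a mixture.

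\emph{Step 2: identifying the optimizer.} The task is now to learn the true conditional data distribution $p_{\mathcal{D}}(x_0\mid y)$ over the mixture dataset $\bigcup_\tau D_\tau$. Use the standard identity
\[
\mathbb{E}_{\tau_c}\log p_\theta(x_0\mid y)=-\mathbb{E}_y\,\mathrm{KL}\big(p_{\mathcal{D}}(\cdot\mid y)\,\|\,p_\theta(\cdot\mid y)\big)+\text{const}.
\]
It shows that maximizing Eq.~(\ref{equ_13}) recovers $p_{\mathcal{D}}(\cdot\mid y)$ exactly, for every condition in the support. Since $p_\theta$ is fit only to dataset trajectories, no out-of-distribution actions are queried, which matches the offline constraint in Definition~\ref{def-1}.

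\emph{Step 3: tying in the diffusion model.} Bound $\log p_\theta$ from below by the variational ELBO over $x_{1:K}$ using the forward process $q$. Then invoke the known reduction of the ELBO to the $\epsilon$-prediction loss of Eq.~(\ref{equ_46}), now with $y$ as an extra input: up to weights, maximizing the likelihood corresponds to minimizing $\mathbb{E}\|\epsilon-\epsilon_\theta(x_k,k,y)\|^2$.

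\emph{Main obstacle.} The hardest part is Step 1, making rigorous the claim that conditioning on constraint-satisfying, high-return $y$ gives the constrained optimum. Expected-cost constraints in Eq.~(\ref{equ3_3}) are not the same as per-trajectory cost conditions, and stochastic transitions can break return-conditioning. I would handle this by assuming deterministic dynamics, or by noting that per-trajectory satisfaction of $C_i\le\bar c_i^\tau$ implies the expected constraint; the latter gives a sufficient, conservative reformulation rather than an exact equivalence. I would state the lemma as that modeling equivalence.
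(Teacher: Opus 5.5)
The paper gives no proof of Lemma~\ref{lem: objective}. It is a definition (``is defined through a conditional generation model''). It posits conditional maximum likelihood over $\bigcup_\tau\mathcal{D}_\tau$ as a stand-in for Eq.~(\ref{equ3_3}), and leaves constraint satisfaction to the choice of cost condition at test time (Remark~\ref{rem: cost}, Eq.~(\ref{equ_20})), without any formal guarantee. Your Steps 2 and 3 are correct standard facts. The KL identity recovers $p_{\mathcal{D}}(\cdot\mid y)$ only at the population level, under realizability, and for $p(y)$-almost every $y$. The ELBO-to-$\epsilon$-loss reduction holds only up to the dropped weights. Together they explain why training with Eq.~(\ref{equ_16}) targets Eq.~(\ref{equ_13}). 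Note, however, that the paper samples from the $\kappa$-guided combination of Proposition~\ref{pro:class-free} with low-temperature noise, not from $p_\theta(\cdot\mid y)$. So optimality of $p_{\mathcal{D}}(\cdot\mid y)$ would not transfer directly to the deployed policy.

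The genuine gap is Step 1, and it is wider than the obstacle you flag. Grant deterministic transitions, and replace the expected constraint by a per-trajectory one; as you say, that is only an inner approximation, since the CMDP optimum may randomize between trajectories on both sides of $\bar c_i^\tau$. Even then, the claim that conditioning on a constraint-satisfying, maximal-$R$ condition attains the constrained optimum fails for three further reasons.

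\textbf{Stitching.} Definition~\ref{def-1} imposes a transition-level support constraint $(s_t^\tau,a_t^\tau,r_t^\tau,c_{i,t}^\tau)\in\mathcal{D}^\tau$, not the trajectory-level one you substitute. Feasible policies may stitch transitions from different trajectories and strictly beat every dataset trajectory. Return-conditioned likelihood fitting only reproduces trajectories that actually occur.

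\textbf{Start state.} $s_0\sim\rho_0$ is random, so the best admissible return depends on the start state. In the paper's receding-horizon scheme, with $h$-step action chunks and reward-/cost-to-go conditions, it also depends on the current observation history. Your $y$ omits the $\mathcal{O}(\tau_c)$ of Eq.~(\ref{equ_15}) and fixes one global target, which need not be attainable from the state actually encountered.

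\textbf{Out-of-distribution conditions.} Once the target is state-dependent, the pair (observation, $(\mathcal{R},\mathcal{C})$) must lie in the support of the training conditions for $p_\theta(\cdot\mid y)$ to be pinned down by Step 2. The paper's test-time conditions $\bar{\mathcal{R}}=\eta_{_{\mathcal{R}}}$ and Eq.~(\ref{equ_20}) carry no such guarantee. So out-of-distribution \emph{conditions} are queried even though no OOD actions enter training, contrary to your closing remark in Step 2.

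Hence the equivalence you set out to prove is false in general, and it is not what the paper claims. The defensible reading is the one you fall back on: Eq.~(\ref{equ_13}) as a modeling definition. Any formal link to Definition~\ref{def-2} should be a separate conditional statement with explicit assumptions: deterministic dynamics, state-conditioned in-support targets, trajectory-level coverage, almost-sure cost constraints, and realizability. Such a statement delivers at best the best constraint-satisfying behavior already present in the data.
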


In the conditional generation model, $x_{0}(\tau_{c})$ is defined as a trajectory sequence of actions or action-state pairs, and $y(\tau_{c},\tau)$ comprises the observation sequence, reward, cost, and task contextual representation. Considering the continuity of action sequences in tasks such as autonomous driving and robotic manipulation, we employ action sequences as the generated sequence $x_{0}(\tau_{c})$ in the conditional generation model. On the other hand, to ensure that the generated action sequences meet the requirements of multi-task safe RL, we incorporate the observation state sequence, task contextual information, costs, and rewards into the conditional sequence $y(\tau_{c},\tau)$. Therefore, the generated and conditional sequence in the conditional generation model are defined as follows:
\begin{equation} 
    \begin{aligned}
        \label{equ_14}
       &x_{k}({\tau_{c}}) = (a_t, a_{t+1},\cdots, a_{t+h-1})_{k},    
    \end{aligned}
\end{equation}
\begin{equation} 
    \begin{aligned}
        \label{equ_15}
        &y(\tau_{c},\tau) =
        \begin{bmatrix}
         \mathcal{O}(\tau_{c}), \mathcal{M}(\tau), \mathcal{R}(\tau_{c}), \mathcal{C}(\tau_{c})
       \end{bmatrix}
       \end{aligned},
\end{equation}
\noindent where $k$ denotes the step in the forward process, $t$ represents the time step at which an action is visited in trajectory $\tau_{c}$, and $x_{k}(\tau_{c})$ is a noisy action sequence of length $h$. $\mathcal{R}(\tau_{c})$ and $\mathcal{C}(\tau_{c})$ are the reward and cost conditions. $\mathcal{O}(\tau_{c})$ is the historical state sequence containing $l$ steps, denoted as $\mathcal{O}(\tau_{c})= (s_{t-l+1},\cdots, s_t)$.  $\mathcal{M}(\tau)$ is the contextual information of the task $\tau$, which is detailed in Section~\ref{context_rep}.


According to Lemma~\ref{lem: objective}, multi-task conditional generation aims to learn a conditional distribution that produces actions maximizing rewards while satisfying cost constraints. We train the reverse diffusion process $p_{\theta}$ using supervised learning through the parameterized noise model $\epsilon_{\theta}$. Additionally, we employ classifier-free guidance to train the diffusion model and follow the DDPM~\cite{NEURIPS2020_4c5bcfec} method for denoising. Therefore, the loss function for the diffusion model is defined as follows:
\begin{equation} 
    \begin{aligned}
        \label{equ_16}
        &\!\!\!\!\!\!\mathcal{L}(\theta)=\mathbb{E}_{k,\tau_{c},\tau}\Big[\big{\Vert}\epsilon-\epsilon_{\theta}\big(x_{k}(\tau_{c}), y'(\tau_{c},\tau), \\
        &\quad \quad ~ \beta(\mathcal{R}(\tau_{c}),\mathcal{C}(\tau_{c}))+(1-\beta)\varnothing, k \big)\big{\Vert}^{2}\Big],\\  
    \end{aligned}
\end{equation}
\noindent where $\beta$ is sampled from a Bernoulli distribution with probability $p_{_\beta}$. $k$ is the denoising step $k\!\!\in\!\!\{1,2,\cdots,K\}$, and $K$ is the maximum number of denoising steps. First, we sample the trajectory $\tau_{c}$ for each task $\tau$, then sample the noise $\epsilon\!\!\sim\!\! \mathcal{N}(0,I)$ and the time step $k$. Subsequently, we construct the state sequence and task contextual information $y'(\tau_c,\tau)\!\!=\!\!(\mathcal{O}(\tau_{c}),\mathcal{M}(\tau))$, reward $\mathcal{R}(\tau_c)$ and cost $\mathcal{C}(\tau_c)$ conditions, and a noisy action sequence $x_{k}(\tau_c)$. Finally, we predict the noise as $\hat{\epsilon}_{\theta}$, ignoring the reward and cost conditions with probability $p_{_\beta}$. 


During the denoising process, we start with Gaussian noise $x_{k}(\tau_c)$ and iteratively sample and denoise it to produce the optimal action sequence $x_{0}(\tau_c)$ that meets the cost constraints. To ensure the generation of high-likelihood sequences, we employ low-temperature sampling techniques for $x_{k-1}(\tau_{c})=(\mu_{k-1},\alpha\sum_{k-1})$ with the parameter $\alpha\!\!\in\!\![0,1)$. During sampling, the mean $\mu_{k\!-\!1}$ and variance $\sum_{k\!-\!1}$ are computed through reverse denoising based on the action sequence $x_{k}(\tau_c)$ and perturbation noise $\hat{\epsilon}$. To guide the generation of noise that satisfies the conditions during the denoising process, we take the state sequence $\mathcal{O}(\tau_{c})$ and contextual information $\mathcal{M}(\tau)$ as necessary conditions and utilize classifier-free guidance to constrain costs and maximize reward. Consequently, the perturbation noise in the intermediate steps from $x_{k}(\tau)$ to $x_{0}(\tau)$  during the denoising is defined as stated in Proposition~\ref{pro:class-free}.
\begin{proposition}
\label{pro:class-free}
The perturbation noise $\hat{\epsilon}$ in the intermediate steps of the denoising process, which takes the state sequence $\mathcal{O}(\tau_{c})$ and task contextual information $\mathcal{M}(\tau)$ as necessary conditions and employs classifier-free guidance to constrain cost and maximize reward, is defined as:
\vspace{-0.1cm}
\begin{equation} 
    \begin{aligned}
        \label{equ_03}
        &\hat{\epsilon}=\kappa\Big[\epsilon_{\theta}\big(x_{k}(\tau_{c}),y'(\tau_{c},\tau),\mathcal{R}(\tau_{c}),k\big) \\
        & \quad +\epsilon_{\theta}\big(x_{k}(\tau_{c}),y'(\tau_{c},\tau),\mathcal{C}(\tau_{c}),k\big)\Big]\\
        &\quad +(1-2\kappa)\epsilon_{\theta}\big(x_{k}(\tau_{c}),y'(\tau_{c},\tau),\varnothing,k\big),
    \end{aligned}
\end{equation}
\noindent where, $y'(\tau_{c},\tau)$ represents the state sequence and task contextual information $y'(\tau_{c},\tau)= (\mathcal{O}(\tau_{c}),\mathcal{M}(\tau))$. $\mathcal{R}$ and $\mathcal{C}$ are the reward and cost conditions, $\kappa$ is the classifier-free guidance coefficient, and $\epsilon_{\theta}$ denotes the parameterized noise distribution with parameters $\theta$.
\end{proposition}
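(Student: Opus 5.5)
The plan is to derive the guided noise from a score-based reading of the conditional density we want to sample from. The key step is to assume the reward and cost conditions are conditionally independent given the action sequence and $y'(\tau_c,\tau)$. This is a modelling assumption rather than something proved earlier in the paper. It lets us multiply the two condition likelihoods.

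\textbf{Target density.} We sample from a sharpened conditional density
\[
\tilde p\big(x_k \mid y',\mathcal{R},\mathcal{C}\big)\;\propto\; p\big(x_k\mid y'\big)\,p\big(\mathcal{R}\mid x_k,y'\big)^{\kappa}\,p\big(\mathcal{C}\mid x_k,y'\big)^{\kappa}.
\]
Here $y'=(\mathcal{O}(\tau_c),\mathcal{M}(\tau))$ is always present, since these are the ``necessary conditions'' in the statement.

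\textbf{Taking scores.} Taking $\nabla_{x_k}\log$ of this density gives
\[
\nabla\log p(x_k\mid y')+\kappa\,\nabla\log p(\mathcal{R}\mid x_k,y')+\kappa\,\nabla\log p(\mathcal{C}\mid x_k,y').
\]
By Bayes' rule, each likelihood score is a difference of conditional scores:
\[
\nabla\log p(\mathcal{R}\mid x_k,y') = \nabla\log p(x_k\mid y',\mathcal{R})-\nabla\log p(x_k\mid y').
\]
The same identity holds for $\mathcal{C}$. The normalising constant does not depend on $x_k$, so it drops out.

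\textbf{Converting scores to noise predictions.} We use the standard DDPM relation $\nabla_{x_k}\log p(x_k\mid\cdot)=-\epsilon(x_k,\cdot,k)/\sqrt{1-\bar\alpha_k}$, with $\bar\alpha_k=\prod_{j\le k}(1-\sigma_j)$. Under the training loss (\ref{equ_16}), the minimiser of the denoising objective is the conditional noise expectation, and that expectation equals this score up to the same factor.

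The Bernoulli dropout $\beta$ in (\ref{equ_16}) is what gives us the null-conditioned predictor. When $\beta=0$, the network is trained on $\epsilon_\theta(x_k,y',\varnothing,k)$, so a single network represents both the unconditional (given $y'$) noise and the conditioned noise.

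One point needs care here. Training as written conditions on the pair $(\mathcal{R},\mathcal{C})$ jointly. The statement, however, uses $\epsilon_\theta(\cdot,\mathcal{R},\cdot)$ and $\epsilon_\theta(\cdot,\mathcal{C},\cdot)$ separately. We therefore interpret each of these as the network with only that condition supplied and the other one masked.

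\textbf{Collecting terms.} Substituting gives
\[
\epsilon_\varnothing+\kappa(\epsilon_{\mathcal R}-\epsilon_\varnothing)+\kappa(\epsilon_{\mathcal C}-\epsilon_\varnothing).
\]
This equals $\kappa(\epsilon_{\mathcal R}+\epsilon_{\mathcal C})+(1-2\kappa)\epsilon_\varnothing$, which is exactly (\ref{equ_03}).

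\textbf{Main obstacle.} The hardest step is justifying the factorisation of $p(\mathcal{R},\mathcal{C}\mid x_k,y')$ into separate reward and cost terms, and the matching identification of the single-condition network outputs. I would make this explicit as an assumption: the return and cost conditions depend on the action sequence through separate channels given $(\mathcal{O},\mathcal{M})$. I would then note that the formula is a definition of the guided sampler consistent with that assumption, not a consequence of the training objective alone.
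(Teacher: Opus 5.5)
Your proposal is correct and follows essentially the same route as the paper's appendix proof. Both arguments use a conditional-independence assumption to factor the target density into $p(x_k\mid y')$ times the two likelihood ratios, raise those ratios to the power $\kappa$, take $\nabla_{x_k}\log$, and replace each score by the corresponding noise prediction (the common scaling factor cancels) to reach $\epsilon_\varnothing+\kappa(\epsilon_{\mathcal R}-\epsilon_\varnothing)+\kappa(\epsilon_{\mathcal C}-\epsilon_\varnothing)$. Your version is slightly cleaner on two points: you need only $\mathcal{R}\perp\mathcal{C}$ given $(x_k,y')$ rather than the paper's joint independence of $y',\mathcal{R},\mathcal{C}$ given $x_k$, and you present the $\kappa$-exponent as part of the definition of the sharpened target, whereas the paper's step $\{ii\}$ wrongly treats it as an identity.
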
 

The proofs and discussions of  Proposition~\ref{pro:class-free} are in \textbf{Appendix~\ref{sup: class-free}}. To guide the conditional generation model in generating action sequences with the desired reward and costs, we employ the reward $\mathcal{R}(\tau_{c})=\sum_{t=j}^{n}\gamma^{t-j}r_{t}^{\tau}$ and cost $\mathcal{C}(\tau_{c})=\sum_{t=j}^{n}\gamma^{t-j}c_{i,t}^{\tau}$ of the remaining trajectories as conditions for classifier-free guidance. Note that due to the varying scales of rewards and costs across different tasks, we normalize the rewards $\mathcal{R}(\tau_{c})$ and cost $\mathcal{C}(\tau_{c})$ conditions for each task to fall within the range $[0,1]$. 
From the conditional generation model in Eq.~(\ref{equ_13}) and the perturbation noise in the denoising process in Eq.~(\ref{equ_03}), it is evident that our cost constraint strategy, based on the conditional generation model, constrains the costs of each task through the remaining cost $\mathcal{R}(\tau_c)$ of the trajectory. Therefore, after training the model, we only need to adjust the cost conditions $\mathcal{R}(\tau_c)$ to achieve different cost constraints without further training. This flexible cost constraint method of the conditional generation model allows us to derive the conclusions presented in Remark~\ref{rem: cost} and Remark~\ref{rem:state-wise cost}.


\begin{remark}
\label{rem: cost}
After completing model training, cost conditions $\mathcal{C}(\tau_c)$ can be constructed based on the accumulated cost of the current trajectory and the cost thresholds $\bar{c}_{i}^{\tau}$ to implement safety constraints at various cost thresholds without requiring further training. Additionally, each task can be assigned a different cost threshold $\bar{c}_{i}^{\tau}$.
\end{remark}

\begin{remark}
\label{rem:state-wise cost}
The CDCP employs the remaining cost $\mathcal{C}(\tau_{c})=\sum_{t=j}^{n}\gamma^{t-j}c_{i,t}^{\tau}$ of the trajectory as the cost constraint condition $\mathcal{C}(\tau_{c})$. Therefore, setting the cost condition to zero can convert the accumulated cost into a state-wise cost constraint.
\end{remark}
\begin{algorithm}[htb!] \small
    \caption{CDCP}
    \label{alg:our_algorithm_train}
    \tcp{Training Process}
    \KwIn{ Multi-task expert trajectory prompts $\mathcal{T}^{p}$; pre-trained language model $L_{\phi}$;
    training task $\mathcal{T}$; dataset $ \mathcal{D}_{\tau\sim\mathcal{T}} =\{(s,a,r,c,s',d)_{i=0}^{n}\}_{\tau\sim\mathcal{T}}$;   }
    \KwOut{ Diffusion network parameters $\epsilon_{\theta} $;
    Weight layer parameters $\mathcal{W}$;
    }
    \For{ $n=1$ \textbf{to} $\mathcal{N}$}{
        \For {each task}{
            Sample action sequence $x_{0}(\tau_{c})$ of length $h$ and corresponding state history $\mathcal{O}(\tau_c)$ of length $l$ from $\mathcal{D}_{\tau}$ with batch size $m$\;
            Compute normalized reward $\mathcal{R}(\tau_c)$ and cost $\mathcal{C}(\tau_c)$ conditions for the sequence under $\tau_{c}$\;
            Sample a trajectory prompt $\tau^{p}$ of length $n$ from $\mathcal{T}^{p}$\; 
            Get the textual information $\tau^{t}$ of the task $\tau$\;  
            Get the task feature vector $\tau_{z}^{t}=L_{\phi}(\tau^{t})$ \;
        }
        Get a batch of sequence $\mathcal{B}=\{x_{0}(\tau_{c}),\mathcal{O}(\tau_c), [\tau^{p},\tau^{t}_{z}], \mathcal{R}(\tau_{c}),\mathcal{C}(\tau_{c})\}_{\tau\sim\mathcal{T}}$\;
        Randomly sample a diffusion timestep $k$ and obtain noisy sequences $x_{k}(\tau_{c})$\;
        Omit the reward $\mathcal{R}(\tau_c)$ and cost $\mathcal{C}(\tau_c)$ condition with probability $\beta\sim \text{Bern}(p_{\beta})$\;
        Obtain the task's feature vector by encoding the textual information $\tau^{t}$ with the pre-trained language model $L_{\phi}$\;
        Compute the loss $\mathcal{L}(\mathcal{W})$ through the Eq.~(\ref{equ_11})\;
        Compute the loss $\mathcal{L}(\theta)$ through the Eq.~(\ref{equ_16}) and~(\ref{equ_45})\;
        Update the weight coefficient $\mathcal{W}(t+1) = \mathcal{W}(t) +\lambda \nabla\mathcal{L}{(\mathcal{W})}$\;
        Update the diffusion model $\theta(t+1) = \theta(t) +\lambda \nabla\mathcal{L}(\theta)$
    }    
\end{algorithm}
\vspace{-0.25cm}
\subsection{Contextual Prompts for Multi-task}
\label{context_rep} 
From Lemma~\ref{lem: objective} and Eq.~(\ref{equ_14}) and ~(\ref{equ_15}), it is evident that the goal of our multi-task conditional generation model is to generate an optimal action sequence $x_{0}(\tau_c)$ that satisfies cost constraints, given the observed state sequence $\mathcal{O}(\tau_c)$, task contextual information $\mathcal{M}(\tau)$, cost constraints $\mathcal{C}(\tau_c)$, and reward maximization $\mathcal{R}(\tau_c)$ conditions. The similarity in observed states across tasks with different actions can result in confusion or conflict in the mapping relationship between observed states and actions. To address this problem, we introduce additional contextual information to differentiate the observations among various tasks. Different tasks exhibit significant differences due to their inherent characteristics. For instance, in autonomous driving, the action sequences of vehicles differ significantly between highway and urban environments, as well as between sparse and dense traffic conditions. Similarly, in robotics manipulation, action sequences for tasks such as pushing, picking, and grasping show notable differences. Therefore, we utilize textual descriptions of tasks to provide characteristic information about them. Additionally, while textual descriptions of tasks can provide information about the category and characteristics of the tasks, they may not effectively convey the mapping relationships between actions and states within the tasks. Therefore, we also employ the state action sequences of the tasks as contextual prompts to provide this information.

Building on the above analysis, we propose a contextual prompting method to address the characterization of multi-task features in the conditional generation model outlined in Proposition~\ref{pro:class-free}. This method utilizes task textual descriptions to represent the categories and characteristics of the tasks and employs trajectory information to elucidate the mapping relationships between states and actions. Therefore, we define the contextual information $\mathcal{M}(\tau)$ as follow:
\begin{equation} 
    \begin{aligned}
    \label{equ_34}
    \mathcal{M}(\tau) = \big[\tau^{p},\tau^{t}_{z}\big],    
    \end{aligned}
\end{equation}
\vspace{-0.4cm}
\begin{equation} 
    \begin{aligned}
    \label{equ_35}
       &\tau^{p} =
       \begin{bmatrix}
       s_{i}^{*} & a_{i}^{*} & s_{i+1}^{*} & \cdots & a_{i+\chi-1}^{*}
       \end{bmatrix},
    \end{aligned}
\end{equation}
\noindent where $\tau^{p}$ represents the task's trajectory of action state pairs. $s_{i}^{*}$ and $a_{i}^{*}$ are the state action pairs from the expert trajectory, and $\chi$ is the length of the state action pairs in the prompt trajectory, where $\chi$ is shorter than the full trajectory length. $\tau_{z}^{t}$ is the hidden feature vector of the task's textual description encoded by the pre-trained language model $\tau_{z}^{t}=L_{\phi}(\tau^{t})$. $L_{\phi}$ is the pre-trained language model, and $\tau^{t}$ is the task's textual description. We utilize the encoded task feature vector $\tau_{z}^{t}$ to represent the task's category and characteristics. The contextual information $\mathcal{M}(\tau)$ conveys the task's category and characteristics through textual information and establishes the mapping relationships between actions and states across tasks using expert trajectory prompts. Consequently, this contextual prompting method enhances the CDCP algorithm's ability to adapt to unseen tasks under few-shot samples. Therefore, we derive the conclusion in Remark~\ref{rem:zero-shot}.

\begin{remark}
\label{rem:zero-shot}
The contextual prompting method combines task textual descriptions with expert trajectory prompts, improving the CDCP algorithm's adaptability to unseen tasks.
\end{remark}
\vspace{-0.2cm}

\begin{algorithm}[htb] \small
    \caption{CDCP}
    \label{alg:our_algorithm_eval}    
    \tcp{Evaluation Process}    
    \KwIn{Diffusion network parameters $\epsilon_{\theta} $;
    weight layer parameters $\mathcal{W}$; few-shot prompts $\mathcal{T}^{p}$; 
    pre-trained language model $L_{\phi}$;}
    Given a task $\tau$, get the textual information $\tau^{t}$ \;
    Reset the environment and obtain the initial state history $\mathcal{O}(\tau_c)$\;
    Set the desired reward $\mathcal{\bar{R}}(\tau_c)$ and cost $\mathcal{\bar{C}}(\tau_c)$\;
    Set low-temperature sampling scale $\alpha$, and classifier-free guidance scale $\kappa$\;
    \For{$t=0$ \textbf{to} $t_{\max}$}{
        Initialize $x_{k}(\tau_c)\sim \mathcal{N}(0,\alpha I)$\;
        Sample a trajectory prompt $\tau^{p}\sim \mathcal{T}^{p}$\;
        Get the task feature vector $\tau_{z}^{t}=L_{\phi}(\tau^{t})$ from the task textual information $\tau^{t}$\;
        Formulate the condition $[\mathcal{O}(\tau_c),\tau^{p},\tau^{t}_{z},\bar{\mathcal{R}}(\tau_c),\bar{\mathcal{C}}(\tau_c)]$\;
        \For{$k=K$ \textbf{to} $1$ }{
        Compute the perturbed noise $\hat{\epsilon}$ through the Eq.~(\ref{equ_03})\;
        $(\mu_{k-1},\sum_{k-1})\leftarrow$ Denoise $(x_{k}(\tau),\hat{\epsilon})$\;
        $x_{k-1}(\tau_c)\sim\mathcal{N}(\mu_{k-1},\alpha\sum_{k-1})$
        }   
        Get the first action from $x_{0}(\tau_c)$ as the current action to interact with the environment \;
        Obtain the next state and update the $\mathcal{O}(\tau_c)$\;
    } 
\end{algorithm}

\vspace{-0.4cm}
\subsection{Gradient Loss Synchronization Learning}
Due to the varying complexity of mapping relationships across different tasks, the loss values for each task differ significantly. This causes some tasks to dominate and interfere with others during simultaneous training. Additionally, these differences in mapping relationships result in varying convergence rates, leading to instability in the multi-task training process. The varying magnitudes of losses and convergence rates across tasks result in gradient interference during simultaneous training. Inspired by GradNorm~\cite{chen2018gradnorm}, we introduce a gradient loss synchronization strategy that accounts for the magnitude of each task's loss and the rate of change in the loss to address the gradient interference issue. 

Based on the above analysis, we assign weights to each task to eliminate gradient interference between tasks when computing the overall multi-task loss. These weights are determined by the magnitude of each task's loss and its rate of change, replacing the previous approach of directly averaging the losses of all tasks.
\begin{equation} 
    \begin{aligned}
        \label{equ_45}
        \mathcal{L}(\theta) = \sum_{\tau\sim\mathcal{T}}\omega_{\tau}*\mathcal{L}_{\tau}(\theta),
    \end{aligned}
\end{equation}
\noindent where $\omega_{\tau}\!\!\in\!\! \mathcal{W}$, $\mathcal{W}$ is the weight layer parameters of the network, and $\omega_{\tau}$ is the weight layer parameter for task $\tau$. $\mathcal{L}_{\tau}$ is the loss gradient for task $\tau$, and $\mathcal{L}$ is the overall loss gradient. Our gradient loss synchronization strategy aims to automatically adjust the weight layer parameters $\mathcal{W}$ based on each task's difficulty level and convergence rate, ensuring that the gradient loss for each task decays stably and synchronously. We introduce the gradient loss norm and the gradient Loss decay rate for each task to design the adjustment strategy for the weight layer parameters $\mathcal{W}$. The gradient loss norm is defined as follows:
\vspace{-0.1cm}
\begin{equation} 
    \begin{aligned}
        \label{equ_07}
         \mathcal{G}_{\mathcal{W}}^{\tau}= \big{\Vert}\nabla_{\mathcal{W}}\omega_{\tau}\mathcal{L}_{\tau} \big{\Vert}_{2} ,
    \end{aligned}
\end{equation}
\vspace{-0.2cm}
\begin{equation} 
    \begin{aligned}
        \label{equ_08}
        \overline{\mathcal{G}}_{\mathcal{W}} = \mathbb{E}_{\tau\sim\mathcal{T}} \big[\mathcal{G}_{\mathcal{W}}^{\tau}\big],
    \end{aligned}
\end{equation}
\noindent where $\mathcal{G}_{\mathcal{W}}^{\tau}$ is the L2 norm of the gradient loss of the weighted loss of a single task concerning the weight $\mathcal{W}$. $\overline{\mathcal{G}}_{\mathcal{W} }$ is the average gradient loss norm across all tasks. We measure each task's gradient loss decay rate using the ratio of gradient losses at different stages. Therefore, the gradient loss decay rate for each task is defined as follows:
\begin{equation} 
    \begin{aligned}
        \label{equ_09}
        \Delta \mathcal{L}_{\tau}(t) = \lambda_{\mathcal{L}}\frac{\mathcal{L}_{\tau}(t)}{\mathcal{L}_{\tau}(0)}+(1-\lambda_{\mathcal{L}})\frac{\mathcal{L}_{\tau}(t)}{\mathcal{L}_{\tau}(t-1)}, 
    \end{aligned}
\end{equation}
\vspace{-0.2cm}
\begin{equation} 
    \begin{aligned}
        \label{equ_10}
        \overline{\Delta \mathcal{L}}_{\tau}(t) = \frac{\Delta \mathcal{L}_{\tau}(t)}{\mathbb{E}_{\tau\sim\mathcal{T}} \big[\Delta \mathcal{L}_{\tau}(t)\big]},
    \end{aligned}
\end{equation}
\noindent where $\mathcal{L}_{\tau}(0)$, $\mathcal{L}_{\tau}(t-1)$, and $\mathcal{L}_{\tau}(t)$ represent the gradient losses of task $\tau$ at steps $0$, $t-1$ and $t$, respectively. $\Delta\mathcal{L}_{\tau}(t)$ represents the measure of the gradient loss decay rate of task $\tau$. $\overline{\Delta \mathcal{L}}_{\tau}(t)$ is the normalized measure of the gradient loss decay rate of the task $\tau$. Note that $\Delta\mathcal{L}_{\tau}(t)$ and $\overline{\Delta \mathcal{L}}_{\tau}(t)$ are inversely related to the measure of training rate. Using the gradient loss norm shown in Eq.~(\ref{equ_08}) and the gradient loss decay rate shown in Eq.~(\ref{equ_10}), we establish the target gradient loss for each task. Consequently, the loss function for the weight layer parameters $\mathcal{W}$ is defined as follow:
\vspace{-0.05cm}
\begin{equation} 
    \begin{aligned}
        \label{equ_11}
        \mathcal{L}(\mathcal{W}) = \sum_{\tau\sim\mathcal{T}} \bigg{\vert} \mathcal{G}_{\mathcal{W}}^{(\tau)}(t) - \overline{\mathcal{G}}_{\mathcal{W}}(t) \times \Big[\overline{\Delta \mathcal{L}}_{\tau}(t)\Big]^{\eta}\bigg{\vert}_{1}, 
    \end{aligned}
\end{equation}
\vspace{-0.05cm}
\noindent where $\eta$ is the parameter for adjusting the gradient loss decay rate of each task, and $\overline{\mathcal{G}}_{\mathcal{W}}(t) \times \big[\overline{\Delta \mathcal{L}}_{\tau}(t)\big]^{\eta}$ is the target gradient loss norm for each task, which we treat as a constant.
\vspace{-0.2cm}
\subsection{Practical Algorithm}
To facilitate understanding of the CDCP algorithm's implementation, we provide a detailed explanation of a practical instance. The pseudocode for training and testing the CDCP algorithm is presented in Algorithm~\ref{alg:our_algorithm_train} and~\ref{alg:our_algorithm_eval}, respectively. The prompt trajectories $\mathcal{T}^{p}$ utilized during the CDCP algorithm's training and testing phases are expert trajectories that satisfy safety constraints. Additionally, the task's textual representation $\tau^{t}$ directly incorporates descriptions of the driving scenarios and traffic density, encoded into task feature vectors $\tau_{z}^{t}$ using the pre-trained language model $L_{\phi}$. The pre-trained model's parameters $\phi$ remain unchanged throughout the training and testing processes. 

In the training phase, multiple tasks require synchronized training under unified reward and cost conditions. Given the differences in rewards and costs across tasks, we normalize the cumulative rewards and costs of the remaining trajectories to a unified scale. The normalized cumulative rewards and costs are defined as follows:
\begin{figure*}[htp!]
\centering
\begin{subfigure}[b]{0.325\textwidth}
  \includegraphics[width=\textwidth]{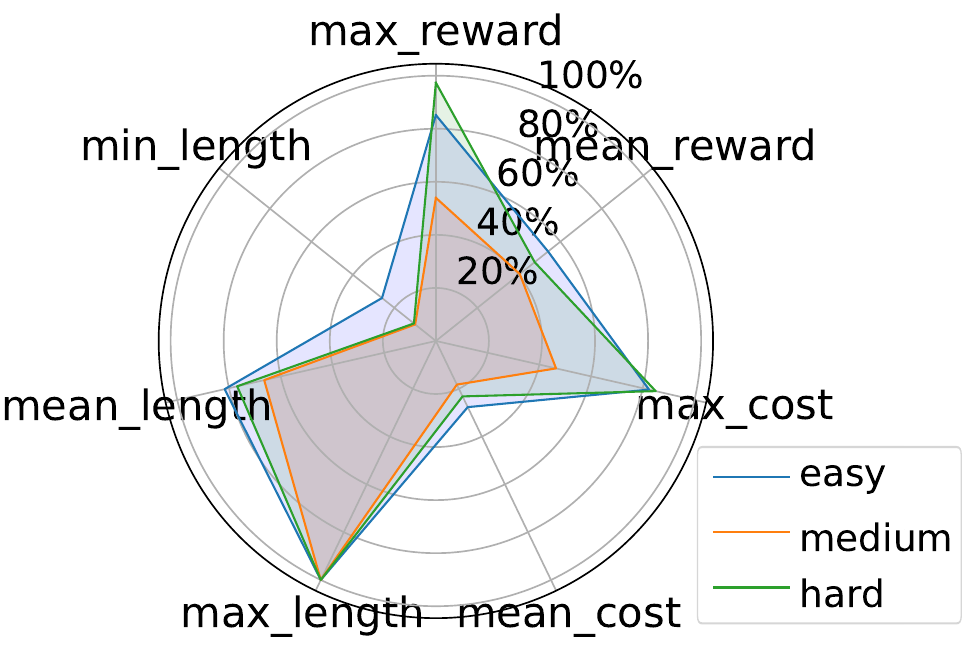}
  \caption{Sparse}
  \label{fig:sub1}
\end{subfigure}
\hspace{-0.25cm}
\begin{subfigure}[b]{0.325\textwidth}
  \includegraphics[width=\textwidth]{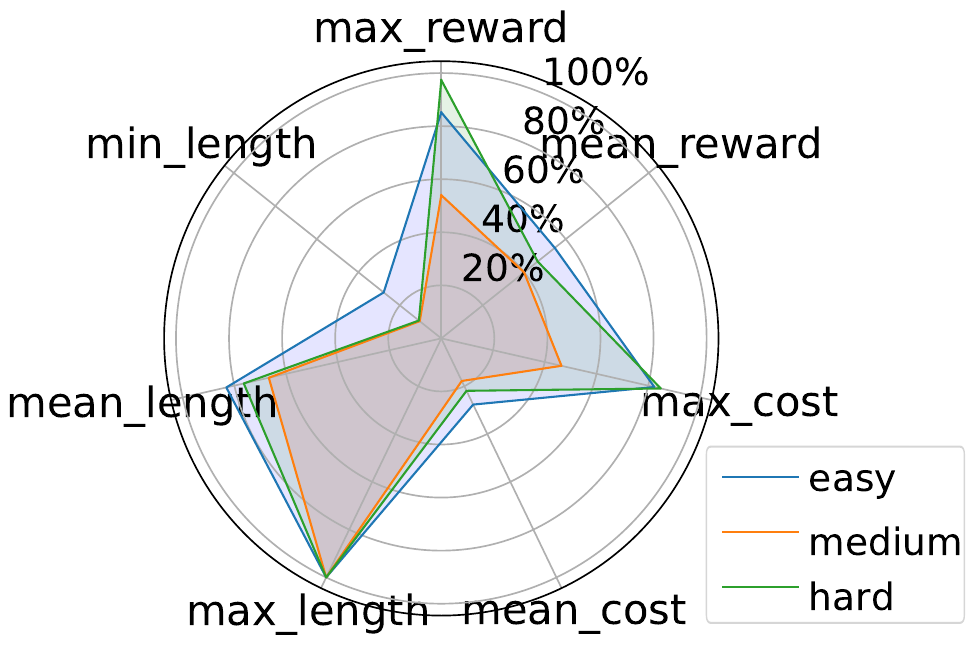}
  \caption{Mean}
  \label{fig:sub2}
\end{subfigure}
\hspace{-0.25cm}
\begin{subfigure}[b]{0.325\textwidth}
  \includegraphics[width=\textwidth]{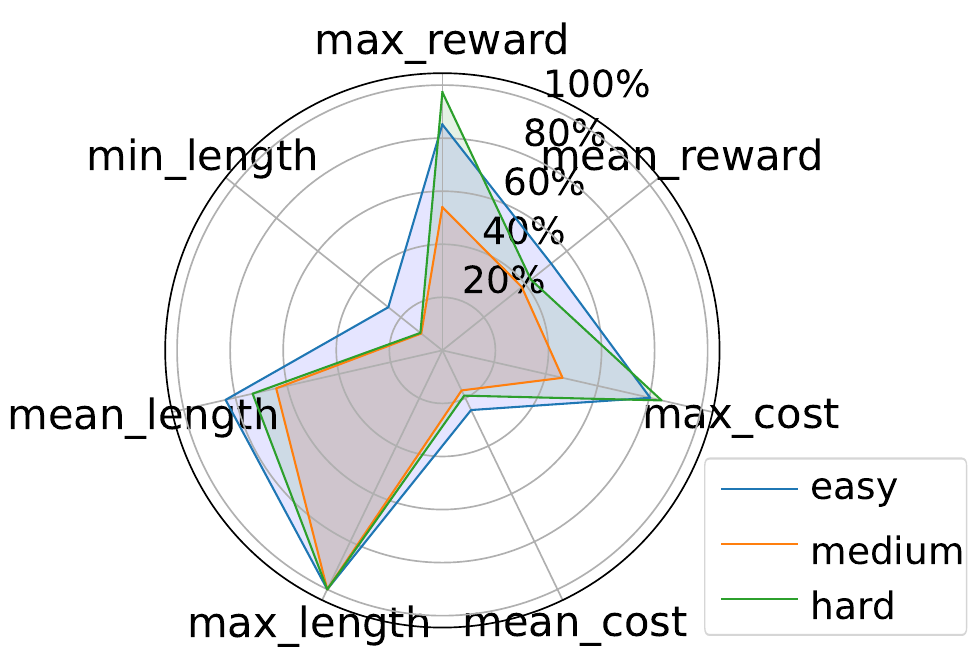}
  \caption{Dense}
  \label{fig:sub3}
\end{subfigure}
\caption{The radar charts depicting the characteristics of sample data from nine different driving tasks with varying traffic scenarios or densities. The charts display statistical results of the sample trajectories' rewards, costs, and trajectory lengths for different tasks. (a) Tasks with sparse traffic density across different traffic scenarios; (b) Tasks with mean traffic density across different traffic scenarios; (c) Tasks with dense traffic density across different traffic scenarios.}
\label{fig:data_episode}
\end{figure*}

\begin{figure*}[htp!]
  \centering
  \begin{subfigure}[b]{0.3\textwidth}
    \includegraphics[width=\textwidth]{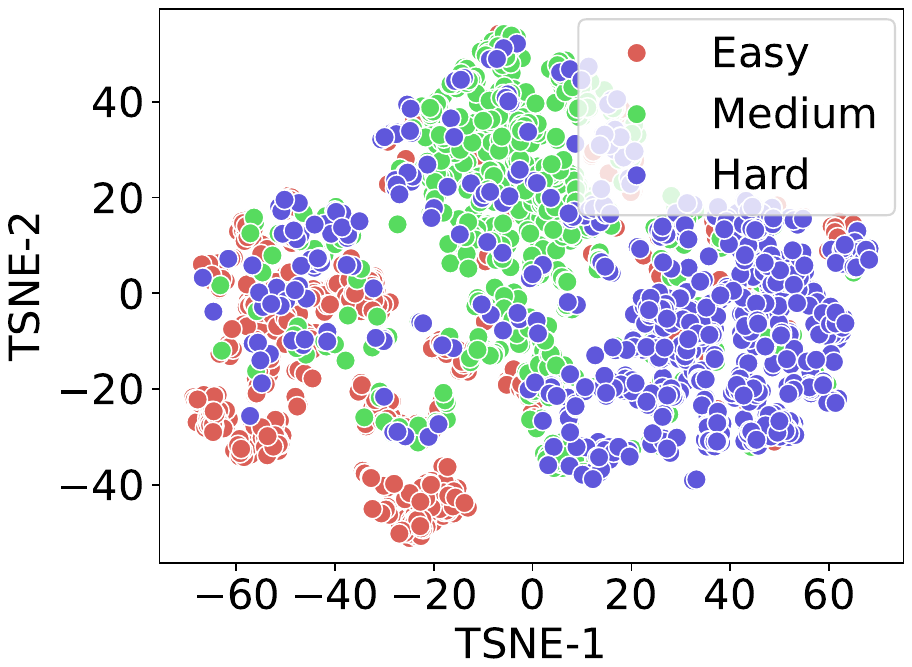}
    \caption{Sparse}
  \end{subfigure}%
  \hspace{0.5cm}
  \begin{subfigure}[b]{0.3\textwidth}
    \includegraphics[width=\textwidth]{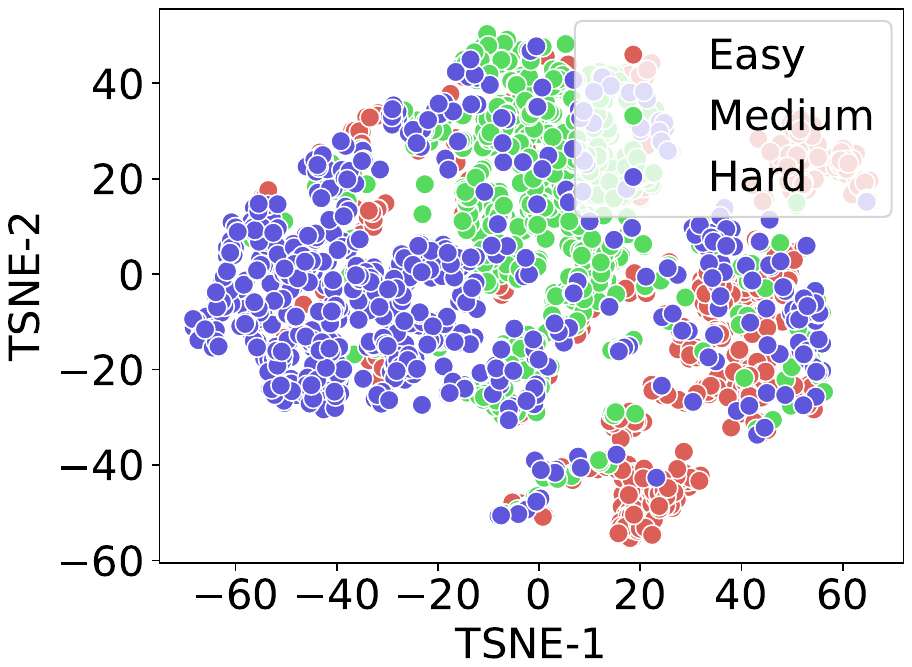}
    \caption{Mean}
  \end{subfigure}%
  \hspace{0.5cm}
  \begin{subfigure}[b]{0.3\textwidth}
    \includegraphics[width=\textwidth]{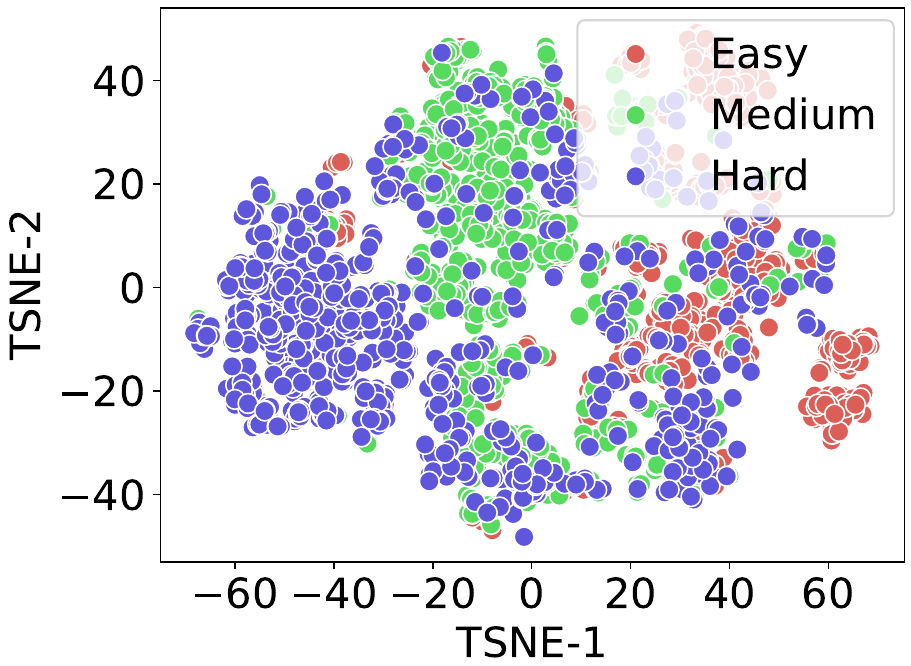}
    \caption{Dense}
  \end{subfigure}
  \caption{The characteristic distribution of sample action-state pairs for nine driving tasks with varying traffic scenarios or densities. For each task, 10,000 action-state pairs are randomly sampled and visualized using T-SNE for dimensionality reduction.  (a) Tasks with sparse traffic density across different traffic scenarios; (b) tasks with mean traffic density across different traffic scenarios; (c) tasks with dense traffic density across different traffic scenarios.}
  \vspace{-0.2cm}
  \label{fig:data_step}
\end{figure*}
\begin{equation} 
    \begin{aligned}
        \label{equ_19}
         \mathcal{R}(\tau_c) = \frac{t_m}{(t_m-t_c)\mathcal{R}_{m}(\tau)}\sum_{t=t_c}^{t_m} \gamma^ {t}r_{t}^{\tau},
    \end{aligned}
\end{equation}
\noindent where $R_{m}(\tau)$ is the maximum cumulative reward of the current task $\tau$. $t_{m}$ is the maximum step of the current trajectory $\tau_c$, and $t_c$ is the current step of the current trajectory $\tau_c$. Note that to ensure the reward value converges to the same fixed point, the cumulative reward is iteratively computed using the discounted cumulative reward. The normalization method for cumulative cost is the same as for reward in Eq.~(\ref{equ_19}).

In the testing phase, following the normalization method for reward and cost conditions used during training, we only need to set the reward condition to 1 and the cost condition to the ratio of the desired cost to the maximum task cost $\tau$. However, given that the cumulative reward of a trajectory satisfying the safety constraints is inevitably less than the task's maximum cumulative reward, we multiply the normalized reward by a correction factor $\eta_{_{\mathcal{R}}}\in(0,1)$. Consequently, our reward condition is set to $\bar{\mathcal{R}}(\tau_c)=\eta_{_{\mathcal{R}}}$. Additionally, during the training phase, the maximum cumulative cost employed for normalization is the non-discounted cumulative cost, whereas the remaining cost employs the discounted cost. This results in the normalized cost $\mathcal{C}(\tau_c)$ being less than 1. Therefore, we multiply the expected cumulative cost by a correction factor $\eta_{_{\mathcal{C}}}$. The cost condition is defined as follows:
\begin{equation} 
    \begin{aligned}
        \label{equ_20}
     \bar{\mathcal{C}}(\tau_c) = \eta_{_\mathcal{C}}\frac{\bar{c}_{i}^{\tau}}{\mathcal{C}_{\max}(\tau)}(\bar{c}_{i}^{\tau}-\mathcal{C}_{p}),
    \end{aligned}
\end{equation}

\noindent where $\bar{c}_{i}^{\tau}$ is the cos threshold of task $\tau$, $\mathcal{C}_{p}$ represents the cumulative cost incurred up to the current step of the trajectory, and $\mathcal{C}_{\max}(\tau)$ is the maximum cumulative cost for task $\tau$. The correction factor $\eta_{_{\mathcal{C}}}\in(0,1)$.

%% file: sec/5_experimental.tex
\begin{table*}[htp!]
    \centering
    \caption{The rewards and costs of the CDCP and the baseline algorithms improved from offline safe RL. The rewards and costs for each of the 9 tasks during simultaneous training and the average reward and cost across the nine tasks are recorded. The table shows results obtained by training with 3 random seeds, recording the best model for each seed, and testing each best model across 10 episodes to gather statistical data. Note that the average reward and cost in the last row of the table are calculated by first averaging the reward and cost for all tasks under the same random seed and episode, then computing the mean values of reward and cost across different random seeds and episodes. The mean and standard deviation recorded in the last row of the table represent the results of this final step for each random seed and episode.}
    \vspace{-0.1cm}
    \label{table: offline_Safe}
    \setlength{\tabcolsep}{2.2mm}{\begin{tabular}{ccccccccc}
    \hline
    \hline \\ [-6pt]
    \multirow{2}{*}{Method} & \multicolumn{2}{c}{MTCPQ} & \multicolumn{2}{c}{MTCOptiDICE} & \multicolumn{2}{c}{MTCDT}   & \multicolumn{2}{c}{CDCP(Ours)}  \\ \cmidrule(lr{0pt}){2-3}  \cmidrule(lr{0pt}){4-5}  \cmidrule(lr{0pt}){6-7} \cmidrule(lr{0pt}){8-9}
                            & Reward$\uparrow$            & Cost$\downarrow$          & Reward$\uparrow$               & Cost$\downarrow$             & Reward$\uparrow$             & Cost$\downarrow$           & Reward$\uparrow$            & Cost$\downarrow$          \\ \hline \\ [-6pt]
    EasySparse              & -2.97$\pm$1.02    & 0.33$\pm$0.47  & 242.81$\pm$85.52    & 57.44$\pm$21.44   & 175.75$\pm$112.97 & 36.97$\pm$26.53 & 326.06$\pm$52.41 & 25.21$\pm$5.17 \\
    EasyMean                & -2.97$\pm$1.02   & 0.33$\pm$0.47  & 233.32$\pm$66.12    & 54.33$\pm$16.44   & 162.32$\pm$92.57  & 33.18$\pm$20.63 & 319.28$\pm$48.60 & 24.68$\pm$5.22 \\
    EasyDense               & -2.97$\pm$1.02   & 0.33$\pm$0.47  & 208.75$\pm$51.40    & 47.19$\pm$12.28   & 155.48$\pm$79.75  & 30.82$\pm$16.97 & 316.47$\pm$41.03 & 24.39$\pm$6.41 \\
    MediumSparse            & -4.31$\pm$1.06   & 1.21$\pm$0.29  & 177.55$\pm$78.63    & 38.44$\pm$17.18   & 70.04$\pm$35.42   & 14.60$\pm$7.83  & 242.75$\pm$26.06 & 20.68$\pm$6.19 \\
    MediumMean              & -4.31$\pm$1.06   & 1.21$\pm$0.29  & 173.61$\pm$67.32    & 37.42$\pm$14.33   & 72.95$\pm$41.91   & 15.31$\pm$9.14  & 236.57$\pm$28.76 & 19.65$\pm$7.78 \\
    MediumDense             & -4.31$\pm$1.06   & 1.21$\pm$0.29  & 161.13$\pm$65.48    & 34.65$\pm$14.03   & 69.95$\pm$37.10   & 14.68$\pm$7.91  & 237.75$\pm$26.97 & 18.75$\pm$6.31 \\
    HardSparse              & -2.56$\pm$1.30   & 0.33$\pm$0.47  & 147.62$\pm$79.43    & 32.66$\pm$19.05   & 94.26$\pm$49.71   & 19.57$\pm$9.94  & 257.98$\pm$51.74 & 24.94$\pm$8.34 \\
    HardMean                & -2.56$\pm$1.30   & 0.33$\pm$0.47  & 154.22$\pm$73.08    & 33.76$\pm$17.22   & 87.88$\pm$40.73   & 18.14$\pm$7.76  & 253.58$\pm$42.94 & 24.78$\pm$7.75 \\
    HardDense               & -2.56$\pm$1.30   & 0.33$\pm$0.47  & 106.05$\pm$69.57    & 22.18$\pm$16.78   & 86.34$\pm$39.00   & 16.88$\pm$7.43  & 243.63$\pm$39.60 & 24.43$\pm$6.23 \\
   \cellcolor{lightgray}{Mean}                    & \cellcolor{lightgray}{-2.62$\pm$0.42}   & \cellcolor{lightgray}{0.62$\pm$0.41}  & \cellcolor{lightgray}{178.34$\pm$34.72}    & \cellcolor{lightgray}{39.79$\pm$7.51}    & \cellcolor{lightgray}{108.33$\pm$47.14}  & \cellcolor{lightgray}{22.24$\pm$9.69}  & \cellcolor{lightgray}{\textbf{270.45$\pm$15.78}} & \cellcolor{lightgray}{\textbf{23.06$\pm$3.51}} \\ \hline
    \end{tabular}
    }
\end{table*}

\begin{table*}[htp!]
    \centering
    \caption{The rewards and costs of the CDCP and baseline algorithms improved from multi-task offline RL or offline RL. The rewards and costs for each of the 9 tasks during simultaneous training and the average reward and cost across the nine tasks are recorded. The table shows results obtained by training with 3 random seeds, recording the best model for each seed, and testing each best model across 10 episodes to gather statistical data. Note that the average reward and cost in the last row of the table are calculated by first averaging the reward and cost for all tasks under the same random seed and episode, then computing the mean values of reward and cost across different random seeds and episodes. The mean and standard deviation recorded in the last row of the table are the results of this final step for each random seed and episode. }
    \vspace{-0.1cm}
    \label{table: mtrl}
    \setlength{\tabcolsep}{4.8mm}{
    \begin{tabular}{ccccccc}
    \hline
    \hline \\ [-6pt]
    \multirow{2}{*}{Method} & \multicolumn{2}{c}{CMTDD} & \multicolumn{2}{c}{CMTDiff} & \multicolumn{2}{c}{CDCP(Ours)}  \\ 
    \cmidrule(lr{0pt}){2-3}  \cmidrule(lr{0pt}){4-5}  \cmidrule(lr{0pt}){6-7} 
                            & Reward$\uparrow$            & Cost$\downarrow$          & Reward$\uparrow$             & Cost$\downarrow$           & Reward$\uparrow$            & Cost$\downarrow$          \\ \hline \\ [-6pt]
    EasySparse              & 73.89$\pm$26.84  & 24.08$\pm$3.81 & 330.39$\pm$67.16  & 45.52$\pm$21.14 & 326.06$\pm$52.41 & 25.21$\pm$5.17 \\
    EasyMean                & 70.36$\pm$23.54  & 23.61$\pm$4.14 & 324.41$\pm$52.04  & 40.07$\pm$17.24 & 319.28$\pm$48.60  & 24.68$\pm$5.22 \\
    EasyDense               & 65.25$\pm$25.98  & 22.33$\pm$4.60  & 308.45$\pm$54.97  & 37.20$\pm$12.33  & 316.47$\pm$41.03 & 24.39$\pm$6.41 \\
    MediumSparse            & 58.20$\pm$22.40  & 21.27$\pm$3.87 & 214.62$\pm$22.39  & 22.44$\pm$5.71  & 242.75$\pm$26.06 & 20.68$\pm$6.19 \\
    MediumMean              & 57.14$\pm$20.49  & 20.71$\pm$3.91 & 213.39$\pm$35.61  & 22.20$\pm$5.98   & 236.57$\pm$28.76 & 19.65$\pm$7.78 \\
    MediumDense             & 53.60$\pm$19.62   & 18.86$\pm$3.11 & 205.67$\pm$26.89  & 20.55$\pm$5.22  & 237.75$\pm$26.97 & 18.75$\pm$6.31 \\
    HardSparse              & 67.35$\pm$24.05  & 21.90$\pm$4.21  & 227.94$\pm$54.91  & 31.06$\pm$13.55 & 257.98$\pm$51.74 & 24.94$\pm$8.34 \\
    HardMean                & 64.10$\pm$23.75   & 20.93$\pm$3.52 & 223.85$\pm$50.25  & 29.68$\pm$12.64 & 253.58$\pm$42.94 & 24.78$\pm$7.75 \\
    HardDense               & 61.82$\pm$23.06  & 20.09$\pm$2.93 & 170.41$\pm$47.02  & 23.61$\pm$11.20  & 243.63$\pm$39.60  & 24.43$\pm$6.23 \\
    \cellcolor{lightgray}{Mean}                    & \cellcolor{lightgray}{63.52$\pm$14.52}  & \cellcolor{lightgray}{21.53$\pm$2.75} & \cellcolor{lightgray}{246.57$\pm$23.84}  & \cellcolor{lightgray}{30.26$\pm$7.78}  & \cellcolor{lightgray}{\textbf{270.45$\pm$15.78}} & \cellcolor{lightgray}{\textbf{23.06$\pm$3.51}} \\ \hline
    \end{tabular}
    }
    \vspace{-0.2cm}
\end{table*}

\section{Experimental Evaluation}


In this section, we conduct comprehensive comparative and ablation experiments between CDCP and previous multi-task offline safe RL algorithms using a series of tasks with varying rewards, costs, observation states, and transition matrices.

\vspace{-0.2cm}
\subsection{Dataset and Baseline}
\noindent \textbf{Dataset.} 
To evaluate the performance of CDCP in the context of multi-tasking with safety constraints, we selected nine tasks commonly utilized in the safety-constrained domain as experimental tasks for this work. Concretely, we chose the \textit{EasySparse}, \textit{EasyMean}, \textit{EasyDense}, \textit{MediumSparse}, \textit{MediumMean}, \textit{MediumDense}, \textit{HardSparse}, \textit{HardMean}, and \textit{HardDense} tasks provided by the DSRL~\cite{liu2023datasets} dataset widely adopted in the field of safety RL~\cite{peng2021learning,peng2022safe}. These tasks include a variety of traffic scenarios and traffic flow densities, necessitating intelligent vehicles to follow traffic rules and drive safely to designated destinations. A detailed discussion of these tasks is provided in \textbf{Appendix~\ref{sup: experimental_task}}. Fig.~\ref{fig:data_episode} and~\ref{fig:data_step} illustrate the statistical features of the sample data for the selected tasks. Fig.~\ref{fig:data_episode} reveals notable variations in parameters such as maximum reward, maximum cost, mean reward, and mean cost across sample trajectories in diverse scenarios under different traffic flow densities.
Furthermore, Fig~\ref{fig:data_step} illustrates that the observed states and actions of samples across different scenarios exhibit significant differences under varying traffic flow densities. These results indicate that the selected tasks' observations, actions, rewards, and costs vary substantially. Consequently, the chosen autonomous driving tasks are well-suited for validating multi-task safe RL algorithms.

\noindent \textbf{Baseline.} Since no dedicated algorithms exist for multi-task offline safe RL, we enhance state-of-the-art offline safe RL and multi-task RL algorithms to develop multi-task offline safe RL algorithms as the experimental baseline. MTCPQ is an improved multi-task offline safe RL algorithm. It builds upon the single-task offline safe RL algorithm CPQ~\cite{xu2022constraints} by encoding the task ID into a feature vector using a multi-layer perceptron and incorporating this encoded vector into the observation state. This enhancement evolves CPQ into MTCPQ, extending its applicability from single-task to multi-task domains. Similar to MTCPQ, the MTCOptiDICE and MTCDT algorithms are improved multi-task offline safe RL algorithms. They expand the single-task offline safe RL algorithms COptiDICE~\cite{lee2021coptidice} and CDT~\cite{liu2023constrained} to the multi-task domain by encoding the task ID into a feature vector and incorporating this encoded vector into the observation state. MTCDD builds upon DD~\cite{ajay2022conditional} by encoding task IDs using a multi-layer perceptron and embedding them into observation states, thereby enhancing its applicability across multi-task domains. Moreover, it introduces cost constraints through classifier-free guidance, thereby expanding the utility of MTCDD into the multi-task safety domain. CMTDiff is an improved multi-task offline safe RL algorithm that extends upon MTDiff~\cite{he2023diffusion} by incorporating cost constraints through classifier-free guidance, thereby broadening its applicability to safety domains.
\begin{figure}[htp!]
    \centering
    \includegraphics[scale=0.285]{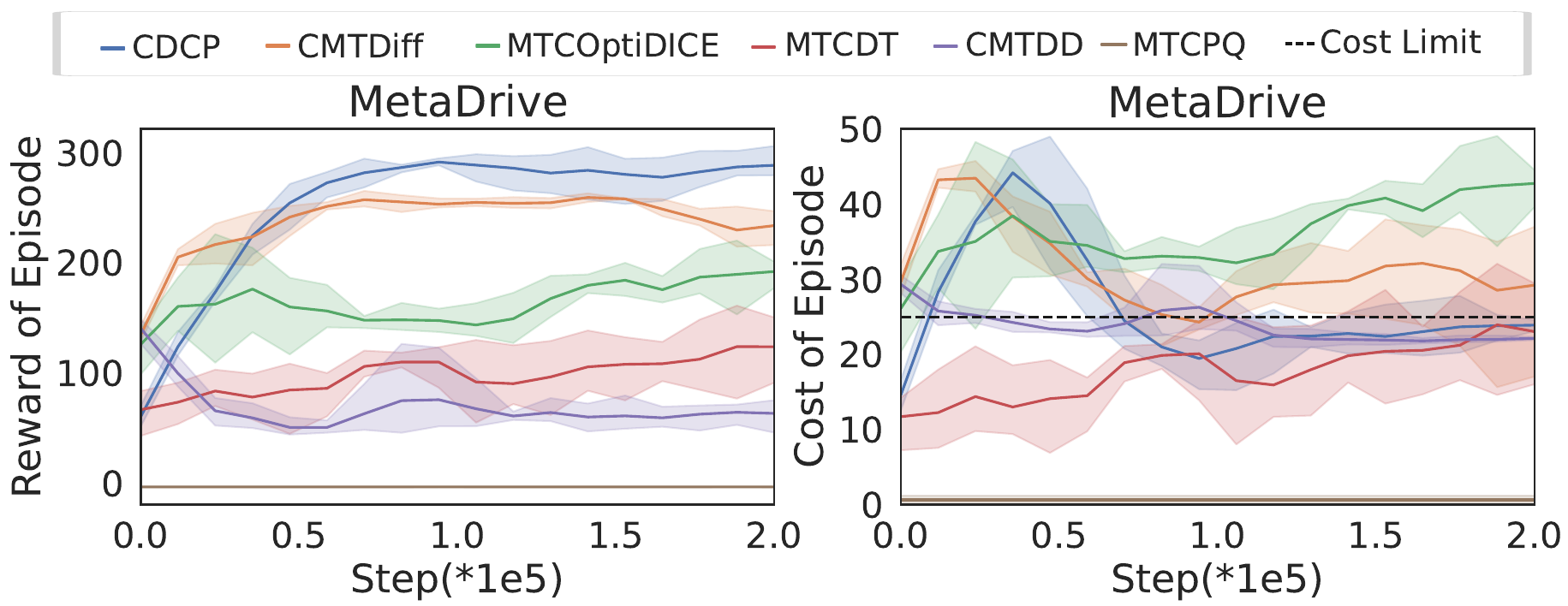}
    \vspace{-0.55cm}
    \caption{The rewards and costs of the CDCP and the current state-of-the-art multi-task offline safe RL algorithms. It presents the mean test results during the simultaneous training of nine tasks. The curves represent the average values of three random seeds, while the shaded areas indicate the standard deviation of the three random seeds.}
    \vspace{-0.35cm}
    \label{fig:comparison}
\end{figure}
\subsection{Comparison Experiment}
\textbf{The performance across all tasks.} We evaluate the performance of the CDCP algorithm in multi-task scenarios by conducting a comprehensive assessment across nine selected autonomous driving tasks. Additionally, we compare the performance of state-of-the-art multi-task offline safe RL algorithms on these autonomous driving tasks. Table~\ref{table: offline_Safe} illustrates a performance comparison between the CDCP algorithm and a multi-task offline safe RL algorithm improved through single-task offline safe RL. The results in the table show that the MTCDT algorithm offers compared to other baseline algorithms. However, it still cannot guarantee the safety of several tasks and achieves relatively lower reward returns. In contrast, the proposed CDCP algorithm incurs slightly higher costs for only a few tasks above the cost threshold and achieves significantly higher reward returns than other baseline methods. Additionally, the mean costs across multiple tasks remain within the threshold constraints. Table~\ref{table: mtrl} presents a performance comparison between the CDCP algorithm and  CMTDD and CMTDiff algorithms. The results indicate that the CMTDiff algorithm achieves relatively higher reward returns across several tasks than another baseline algorithm. However, it also incurs costs that exceed the threshold for most tasks, with the mean costs for all tasks surpassing the threshold. In contrast, the proposed CDCP algorithm keeps the costs for most tasks within the threshold and delivers higher reward returns than other baseline algorithms. Fig.~\ref{fig:comparison} illustrates the curves representing the mean reward and cost across all tasks during synchronous testing of the CDCP and the baseline algorithms throughout the training process. The figure's results indicate that the CDCP algorithm's mean cost stays below the threshold. At the same time, it delivers higher reward returns than other baseline algorithms. Based on the results and analysis presented above, it is clear that the CDCP algorithm achieves higher reward returns than other baseline algorithms while maintaining safety constraints. This underscores the superior performance of the CDCP algorithm relative to baseline methods, especially in achieving enhanced reward returns.
\begin{table*}[htp!]
    \centering
    \caption{The rewards and costs of the CDCP and state-of-the-art baseline algorithms under different cost thresholds. The table presents results obtained by training with three random seeds, recording the best model for each seed, and testing each best model across ten episodes to gather statistical data. The mean and standard deviation in the table represent the average rewards and costs computed after averaging all tasks for each random seed and episode. Additionally, the CDCP is trained solely with a cost threshold of 25, and no further training is performed when testing with other cost thresholds.}
    \vspace{-0.2cm}
    \label{table: various_cost_limit}
    \setlength{\tabcolsep}{2mm}{
    \begin{tabular}{ccccccccc}
    \hline
    \hline \\ [-6pt]
    \multirow{2}{*}{Method} & \multicolumn{2}{c}{30}     & \multicolumn{2}{c}{25}    & \multicolumn{2}{c}{20}    & \multicolumn{2}{c}{0}     \\ \cmidrule(lr{0pt}){2-3}  \cmidrule(lr{0pt}){4-5}  \cmidrule(lr{0pt}){6-7} \cmidrule(lr{0pt}){8-9} 
                            & Reward$\uparrow$            & Cost$\downarrow$           & Reward$\uparrow$            & Cost$\downarrow$          & Reward$\uparrow$            & Cost$\downarrow$          & Reward$\uparrow$            & Cost$\downarrow$          \\ \hline \\ [-6pt]
    MTCPQ                   & -3.42$\pm$0.36   & 0.98$\pm$0.47   & -3.28$\pm$0.42   & 0.63$\pm$0.41  & -3.72$\pm$0.24   & 1.28$\pm$0.09  & -2.45$\pm$1.74   & 0.87$\pm$0.62  \\
    MTCOptiDICE             & 188.27$\pm$39.33 & 41.95$\pm$8.67  & 178.34$\pm$34.72 & 39.79$\pm$7.51 & 162.81$\pm$33.74 & 33.17$\pm$7.05 & 125.16$\pm$24.82 & 21.94$\pm$8.16 \\
    MTCDT                   & 119.5$\pm$49.85  & 28.32$\pm$10.81 & 108.33$\pm$47.14 & 22.24$\pm$9.69 & 103.62$\pm$29.44 & 18.48$\pm$6.61 & 84.39$\pm$40.17  & 8.63$\pm$8.77  \\
    CMTDD                   & 87.94$\pm$17.96  & 26.97$\pm$3.16  & 63.52$\pm$14.52  & 21.53$\pm$2.75 & 50.12$\pm$10.67  & 17.35$\pm$2.31 & \textbf{28.76$\pm$7.35}   & \textbf{0.85$\pm$0.64}  \\
    CMTDiff                 & 278.36$\pm$31.02 & 33.85$\pm$9.13  & 246.57$\pm$23.84 & 30.26$\pm$7.78 & 223.16$\pm$20.13 & 26.39$\pm$6.74 & 168.93$\pm$14.58 & 13.38$\pm$4.17 \\
    \cellcolor{lightgray}{CDCP}                    & \cellcolor{lightgray}{\textbf{285.75$\pm$19.43}} & \cellcolor{lightgray}{\textbf{29.34$\pm$4.76}}  & \cellcolor{lightgray}{\textbf{270.45$\pm$15.78}} & \cellcolor{lightgray}{\textbf{23.06$\pm$3.51}} & \cellcolor{lightgray}{\textbf{254.14$\pm$14.32}} & \cellcolor{lightgray}{\textbf{19.93$\pm$3.26}} & \cellcolor{lightgray}{163.78$\pm$9.75}  & \cellcolor{lightgray}{1.28$\pm$0.93}  \\ \hline
    \end{tabular}
    }
\end{table*}

\noindent \textbf{The performance under various cost thresholds $\bar{c}_{i}^{\tau}$.} To assess the robustness of the CDCP and baseline algorithms across varying cost thresholds, we evaluate their performance under different cost thresholds. Table~\ref{table: various_cost_limit} displays the average reward and cost of the CDCP and baseline algorithms across all tasks at cost thresholds of $0$, $20$, $25$, and $30$. The table results demonstrate that at non-zero cost thresholds, the mean cost of the CDCP algorithm consistently remains within the cost constraints while delivering higher reward returns than the baseline algorithms. Additionally, although the CDCP algorithm does not achieve zero cost at a cost threshold, the cost is very close to zero and provides competitive reward returns. It is important to note that we evaluate the performance of the CDCP algorithm across various cost thresholds without retraining the model, adjusting the cost thresholds directly during the testing phase. This illustrates that the CDCP algorithm offers a flexible approach to configuring cost thresholds, consistent with Remark~\ref{rem: cost}. Furthermore, setting the cost threshold to zero transforms the CDCP algorithm from cumulative to state-wise cost constraints, consistent with Remark~\ref{rem:state-wise cost}. The above results and analysis indicate that the CDCP algorithm exhibits high robustness across different thresholds and provides a more flexible approach to setting cost thresholds.

\begin{table}[htp!]
    \centering
    \caption{The performances of the CDCP and CMTDiff algorithms under few-shot conditions. Six tasks are used as training, while the remaining three are designated as unseen tasks. The results are obtained by training with three random seeds, recording the best model for each seed, and testing each best model across ten episodes to gather statistical data. }
    \vspace{-0.20cm}
    \label{table: unseen_task}
    \setlength{\tabcolsep}{0.6mm}{
    \begin{tabular}{cccccc}
    \hline
    \hline \\ [-6pt]
    \multirow{2}{*}{Class}      & \multirow{2}{*}{Task} & \multicolumn{2}{c}{CMTDiff} & \multicolumn{2}{c}{CDCP} \\ \cmidrule(lr{0pt}){3-4} \cmidrule(lr{0pt}){5-6}
                                &                       & Reward $\uparrow$      & Cost $\downarrow$        & Reward $\uparrow$      & Cost$\downarrow$       \\ \hline \\ [-6pt]
    \multirow{6}{*}{\begin{tabular}[c]{@{}c@{}}Seen\\ Task\end{tabular}}   & EasSpa            & 337.79$\pm$65.70   & 42.52$\pm$20.48 & 328.06$\pm$50.69 & 25.03$\pm$4.75 \\
                                                                       & EasMea              & 328.96$\pm$49.84  & 37.07$\pm$15.69 & 321.17$\pm$47.09 & 24.9$\pm$4.99  \\
                                                                       & MedSpa          & 221.62$\pm$21.90   & 22.04$\pm$5.09  & 247.75$\pm$25.21 & 21.89$\pm$5.25 \\
                                                                       & MedMea            & 218.39$\pm$29.57  & 21.93$\pm$4.57  & 243.57$\pm$27.12 & 20.43$\pm$6.90  \\
                                                                       & HarSpa            & 229.50$\pm$52.44   & 26.4$\pm$11.32  & 262.35$\pm$50.44 & 24.47$\pm$8.14 \\
                                                                       & HarMea              & 226.48$\pm$49.86  & 24.92$\pm$10.07 & 258.15$\pm$40.69 & 24.11$\pm$7.02 \\ \hline \\ [-6pt]  
\multirow{3}{*}{\begin{tabular}[c]{@{}c@{}}Unseen\\ Task\end{tabular}} & EasDen             & 298.33$\pm$58.3   & 45.20$\pm$15.63  & 308.47$\pm$44.24 & 25.65$\pm$8.21 \\
                                                                       & MedDen           & 192.71$\pm$31.51  & 25.46$\pm$9.38  & 229.84$\pm$29.13 & 22.41$\pm$7.46 \\
                                                                       & HarDen             & 158.41$\pm$54.69  & 26.61$\pm$15.68 & 231.23$\pm$43.37 & 24.97$\pm$7.81 \\ \hline
    \end{tabular}
    }
\end{table}

\subsection{Few-shot Experiment}
To evaluate the generalization ability of the CDCP algorithm to unseen tasks, we further test its performance on unseen tasks with few-shot samples. We select six tasks on MetaDrive for training and evaluate the performance of the CDCP and MTDiff algorithms on three unseen tasks. The experimental results for unseen tasks with few-shot samples are presented in Table~\ref{table: unseen_task}. These results indicate that the CDCP algorithm ensures compliance with safety constraints and achieves superior performance compared to the baseline in the \textit{MediumDense} and \textit{HardDense} tasks. Furthermore, the cost in the \textit{EasyDense} task is nearly at the cost threshold. The results and analysis above demonstrate that the CDCP algorithm can generalize to unseen tasks with few-shot samples, thereby corroborating Remark~\ref{rem:zero-shot}.

\begin{figure}[htp!]
    \centering
    \includegraphics[scale=0.285]{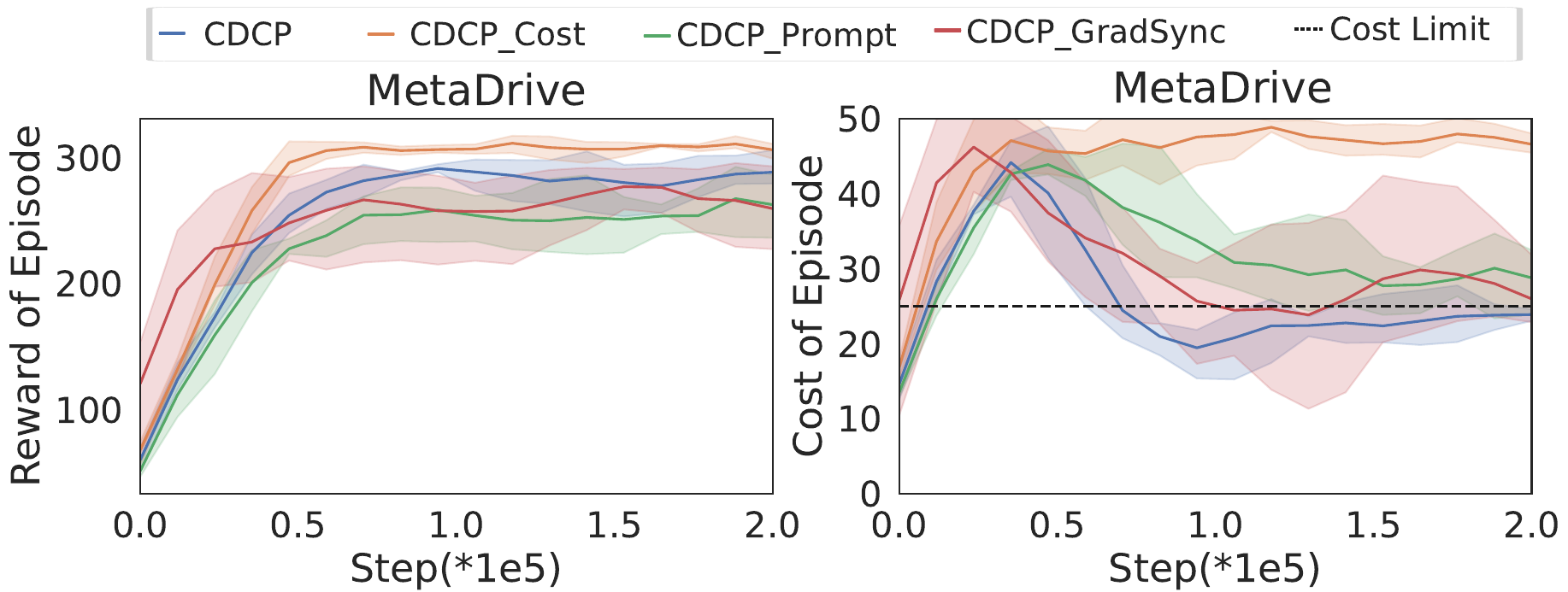}
    \vspace{-0.55cm}
    \caption{The reward and cost curves for the ablation experiments on the CDCP algorithm regarding cost constraints, contextual prompts, and gradient loss synchronization methods. It presents the mean test results during the simultaneous training of nine tasks. The curves represent the average values of three random seeds, while the shaded areas indicate the standard deviation of the three random seeds. }
    \vspace{-0.15cm}
    \label{fig:ablation_main}
\end{figure}

\begin{figure}[htp!]
    \centering
    \includegraphics[scale=0.285]{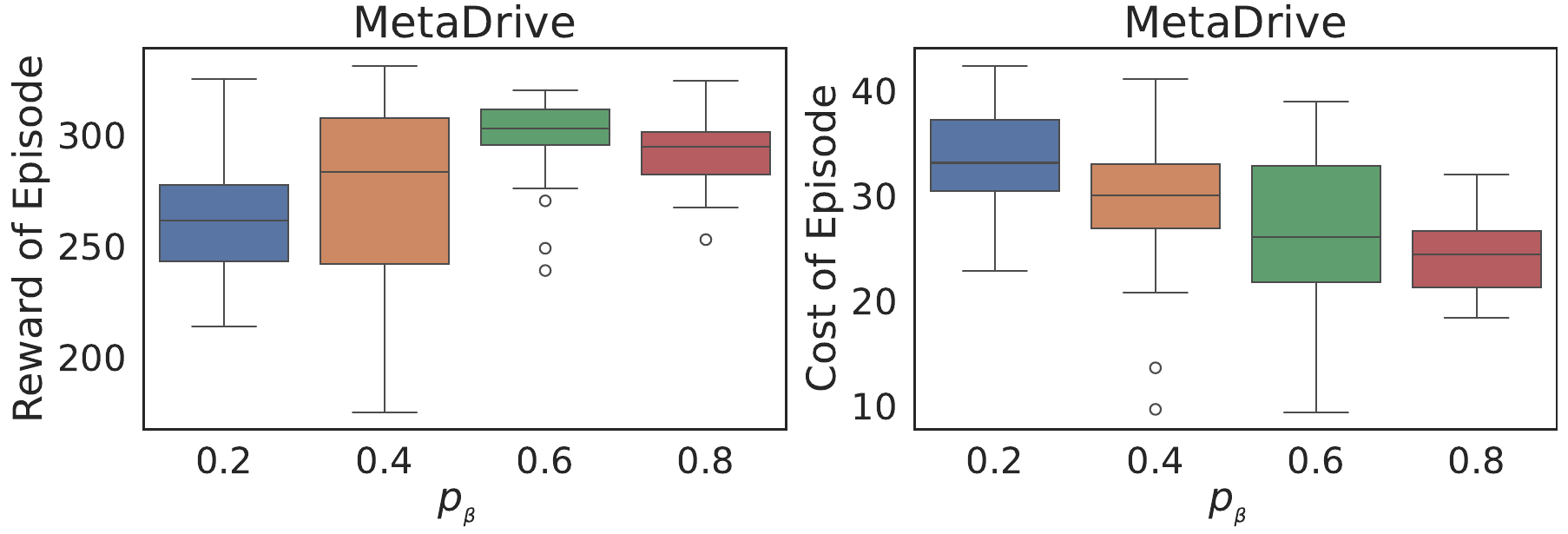}
    \vspace{-0.25cm}
    \caption{The mean and variance of rewards and costs for all tasks under different classifier-free guidance probability parameters $p_{_\beta}$ for the CDCP algorithm. The best model for each of the three random seeds is recorded during training, and each best model is subsequently tested across 10 episodes.}
    \vspace{-0.35cm}
    \label{fig:ablation_mask}
\end{figure}

\subsection{Ablation Experiment}
\noindent \textbf{Performance of Cost Constraints, Contextual Prompts, and Gradient Loss Synchronization.} We assess the impact of these methods on the CDCP algorithm by removing cost constraints, contextual prompts, and gradient loss synchronization methods. CDCP\_Cost represents the CDCP algorithm with ablate cost constraints, CDCP\_Prompt represents the CDCP algorithm with ablate contextual prompts, and CDCP\_GradSync represents the CDCP algorithm with ablate gradient loss synchronization. The ablation results shown in Fig.~\ref{fig:ablation_main} indicate that while the reward of the CDCP algorithm is lower than that of the CDCP\_Cost algorithm, the CDCP algorithm maintains its cost within the constraint threshold, whereas the cost of the CDCP\_Cost algorithm significantly exceeds this threshold. This suggests that the cost-constraint strategy of the CDCP algorithm effectively manages costs. Additionally, the results in Fig~\ref{fig:ablation_main} demonstrate that the reward of the CDCP\_Prompt algorithm is lower than that of the CDCP algorithm, and its cost sightly exceeds the cost threshold. This suggests that contextual prompt information can effectively improve the safety and reward return of the CDCP algorithm. The results in Fig.~\ref{fig:ablation_main} also show that the CDCP\_GradSync algorithm converges quickly but ultimately has a lower reward than the CDCP. Additionally, the reward and cost curves of the CDCP\_GradSync algorithm exhibit significant fluctuations, with the cost slightly exceeding the cost threshold. This indicates that the gradient loss synchronization strategy enhances the convergence stability of the CDCP algorithm and improves its performance by mitigating gradient interference. From the above results and analysis, we can conclude that both contextual prompts and gradient loss synchronization methods enhance the performance of the CDCP algorithm. Furthermore, the gradient synchronization method improves the convergence stability of the CDCP algorithm by mitigating gradient interference. Moreover, the cost constraint strategy effectively manages the cost of the CDCP algorithm.

\vspace{0.1cm}
\noindent \textbf{The parameter of classifier-free guidance probability $p_{_\beta}$. } 
To assess the impact of the unclassified guidance probability parameter on the CDCP algorithm, we evaluate its performance with $p_{_\beta}$ values set to 0.2, 0.4, 0.6, and 0.8, respectively. The results of the ablation experiments for parameter $p_{_\beta}$ are presented in Fig.~\ref{fig:ablation_mask}. The figure indicates that as parameter $p_{_\beta}$ increases, the reward of the CDCP algorithm also increases while the cost gradually decreases. However, once the parameter exceeds a certain threshold, the reward stops increasing and decreases while the cost stabilizes. This suggests that selecting an appropriate classifier-free guidance probability parameter $p_{_\beta}$ can effectively balance the reward and cost of the CDCP.

%% file: sec/6_conclusion.tex
\section{Conclusion}
In this work, we propose a novel conditional diffusion model with contextual prompts for the multi-task offline safe RL. Concretely, we reconsider the requirements of offline RL in practical applications and establish new objectives for multi-tasking offline safe RL. This objective focuses on learning policies that meet safety constraints using sample data from multiple offline tasks. Subsequently, we transform the contained optimization problem of multi-task offline safe RL into a conditional generation problem using the diffusion model. Building on this, we design a cost-constraint strategy with classifier-free guidance to eliminate extrapolation errors caused by temporal difference interactions through supervised learning. Additionally, we propose a contextual prompt method to achieve multi-task representation based on knowledge sharing. Furthermore, to address interference between tasks, we introduce a gradient loss synchronization strategy to eliminate gradient interference. Extensive experiments in the MetaDrive environment demonstrate that the CDCP algorithm provides higher reward returns than state-of-the-art baseline methods while ensuring the safety of multiple tasks.

%% file: sec/7_supplement.tex
\clearpage
\section{Supplementary Material}
\subsection{Proof and Discussion the Proposition IV.2}
\label{sup: class-free}
\begin{proposition}
\label{pro:class-free_s}
The perturbation noise $\hat{\epsilon}$ in the intermediate steps of the denoising process, which takes the state sequence $\mathcal{O}(\tau_{c})$ and task contextual information $\mathcal{M}(\tau)$ as necessary conditions and employs classifier-free guidance to constrain cost and maximize reward, is defined as:
\begin{equation} 
    \begin{aligned}
        \label{equ_s3}
        &\hat{\epsilon}=\kappa\Big[\epsilon_{\theta}\big(x_{k}(\tau_{c}),y'(\tau_{c},\tau),\mathcal{R}(\tau_{c}),k\big) \\
        & \quad +\epsilon_{\theta}\big(x_{k}(\tau_{c}),y'(\tau_{c},\tau),\mathcal{C}(\tau_{c}),k\big)\Big]\\
        &\quad +(1-2\kappa)\epsilon_{\theta}\big(x_{k}(\tau_{c}),y'(\tau_{c},\tau),\varnothing,k\big),
    \end{aligned}
\end{equation}
\noindent where, $y'(\tau_{c},\tau)$ represents the state sequence and task contextual information $y'(\tau_{c},\tau)= (\mathcal{O}(\tau_{c}),\mathcal{M}(\tau))$. $\mathcal{R}$ and $\mathcal{C}$ are the reward and cost conditions, $\kappa$ is the classifier-free guidance coefficient, and $\epsilon_{\theta}$ denotes the parameterized noise distribution with parameters $\theta$.
\end{proposition}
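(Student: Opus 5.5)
The plan is to justify the stated form of $\hat{\epsilon}$ from the score-function view of classifier-free guidance. We treat $y'=(\mathcal{O}(\tau_c),\mathcal{M}(\tau))$ as always present and the reward and cost conditions as two separately guided attributes. Throughout, write $x_k=x_k(\tau_c)$.

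\textbf{Step 1: noise--score identity.} Under the DDPM forward process, the trained noise predictor relates to the score by
\[
\epsilon_\theta(x_k,y',\cdot,k)\approx-\sqrt{1-\bar{\alpha}_k}\,\nabla_{x_k}\log p_k(x_k\mid y',\cdot),
\]
where $\bar{\alpha}_k=\prod_{j\le k}(1-\sigma_j)$. Because of the Bernoulli dropout $\beta$ in Eq.~(\ref{equ_16}), a single network represents both the unconditional term $\epsilon_\theta(x_k,y',\varnothing,k)$ and the conditional terms. I will assume that dropping $\mathcal{R}$ and $\mathcal{C}$ independently during training gives networks for the partially conditioned densities $p(x_k\mid y',\mathcal{R})$ and $p(x_k\mid y',\mathcal{C})$.

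\textbf{Step 2: target guided density.} Following the classifier-free guidance literature, the target is the tilted density
\[
\tilde p(x_k)\propto p(x_k\mid y')\,p(\mathcal{R}\mid x_k,y')^{\kappa}\,p(\mathcal{C}\mid x_k,y')^{\kappa}.
\]
The first implicit-classifier factor steers sampling toward high reward, and the second steers it toward satisfying the cost condition. The paper's perturbation assigns both guidance terms the same coefficient $\kappa$, so I use a common exponent. By Bayes' rule,
\[
\nabla\log p(\mathcal{R}\mid x_k,y')=\nabla\log p(x_k\mid y',\mathcal{R})-\nabla\log p(x_k\mid y'),
\]
and the analogous identity holds for $\mathcal{C}$. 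This step needs no conditional independence assumption, because each attribute is guided separately against the same base $p(x_k\mid y')$.

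\textbf{Step 3: combine the scores.} Summing the log-gradients gives
\[
\nabla\log\tilde p=(1-2\kappa)\nabla\log p(x_k\mid y')+\kappa\,\nabla\log p(x_k\mid y',\mathcal{R})+\kappa\,\nabla\log p(x_k\mid y',\mathcal{C}).
\]
Multiplying by $-\sqrt{1-\bar{\alpha}_k}$ and substituting the identity from Step 1 yields exactly $\hat\epsilon$ in Eq.~(\ref{equ_s3}). The coefficients sum to one, so when $\kappa=1/2$ the guidance reduces to averaging the two conditional predictors, which serves as a consistency check. Finally, plugging $\hat\epsilon$ into the DDPM mean $\mu_{k-1}$ gives the denoising update used in Algorithm~\ref{alg:our_algorithm_eval}.

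\textbf{Main obstacle.} The "proposition" is largely a definition, so the nontrivial part is justifying that the conditioning-dropout scheme in Eq.~(\ref{equ_16}) actually yields the two single-condition predictors. As written, the scheme drops $(\mathcal{R},\mathcal{C})$ jointly, so a network conditioned on only one of them is not obviously trained. I would handle this by making explicit the assumption of independent per-condition masking, or by treating the single-condition inputs as the joint network with the other slot set to $\varnothing$. I would state this as a modeling assumption rather than derive it.
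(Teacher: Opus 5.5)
Your argument is correct and has the same skeleton as the paper's proof: factor a density, take $\nabla_{x_k}\log$, then substitute the score--noise relation. It differs in where $\kappa$ and the independence assumption enter.

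The paper starts from the true joint conditional $p(x_k\mid y',\mathcal{R},\mathcal{C})$. It assumes $y'$, $\mathcal{R}$, $\mathcal{C}$ are conditionally independent given $x_k$ so that it can factor this as $p(x_k\mid y')\frac{p(x_k\mid y',\mathcal{R})}{p(x_k\mid y')}\frac{p(x_k\mid y',\mathcal{C})}{p(x_k\mid y')}$. It then raises the ratio product to the power $\kappa$ and calls this an equality because ``a power of 1 is still 1''. That step is not valid, since those ratios are not identically one.

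You instead define the guided target as the tilted density $\tilde p\propto p(x_k\mid y')\,p(\mathcal{R}\mid x_k,y')^{\kappa}\,p(\mathcal{C}\mid x_k,y')^{\kappa}$. Here $\kappa$ enters by construction, and, as you correctly note, the two Bayes identities need no independence assumption. Your route is therefore sound for every $\kappa$. It is also explicit that $\hat{\epsilon}$ samples a sharpened product-of-experts rather than $p(x_k\mid y',\mathcal{R},\mathcal{C})$ itself.

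What the paper's conditional-independence assumption adds, once its step $\{ii\}$ is read as a definition rather than an equality, is an interpretation at $\kappa=1$. There your $\tilde p$ coincides with the genuine joint conditional, so $\hat{\epsilon}=\epsilon_{\theta}(x_k,y',\mathcal{R},k)+\epsilon_{\theta}(x_k,y',\mathcal{C},k)-\epsilon_{\theta}(x_k,y',\varnothing,k)$ is the correct noise prediction. This is a more informative consistency check than your $\kappa=1/2$ average.

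Your remark that Eq.~(\ref{equ_16}) drops $(\mathcal{R},\mathcal{C})$ jointly is also well taken. The paper uses the single-condition predictors without comment, so the independent-masking assumption you propose is implicitly needed in its proof as well.
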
 
\noindent \textbf{\textit{Proof.}} We assume that $y'(\tau_c,\tau)$, $\mathcal{R}(\tau_c)$, and $\mathcal{C}(\tau_c)$ are conditionally independent given $x_{k}(\tau_c)$. Given this assumption, the derivation proceeds as follows:
\begin{equation} 
    \begin{aligned}
        \label{equ_15_s}       
        &p\big(x_{k}|y',\mathcal{R},\mathcal{C}\big) \overset{\{i\}}{\propto} p(x_{k}|y') \frac{p(x_{k}|y',\mathcal{R})}{p(x_{k}|y')}\frac{p(x_{k}|y',\mathcal{C})}{p(x_{k}|y')} \\
        & \quad \quad \quad \quad \quad \overset{\{ii\}}{=} p(x_{k}|y') \left[\frac{p(x_{k}|y',\mathcal{R})}{p(x_{k}|y')}\frac{p(x_{k}|y',\mathcal{C})}{p(x_{k}|y')}\right]^{\kappa},\\        
    \end{aligned}
\end{equation}
\noindent where $y'$,$\mathcal{R}$, and $ \mathcal{C}$ are abbreviations for $y'(\tau_c,\tau)$,$\mathcal{R}(\tau_c)$ and $\mathcal{C}(\tau_c)$, respectively. 
$\{i\}$ indicates that event $a$ is independent of event $b$, and we have the equation $p(a|b)=p(a)$. $\{ii\}$ indicates that the $k$-th power of 1 is still equal to 1. Taking the logarithm of Eq.~\ref{equ_15_s} and then taking the partial derivative concerning $x_k$, we obtain Eq.~\ref{equ_16_s} and~\ref{equ_17_s}. 
\begin{equation} 
    \begin{aligned}
        \label{equ_16_s}        
        &\log{p\big(x_{k}|y',\mathcal{R},\mathcal{C}\big)} \propto \log{p(x_{k}|y')} \\
        & \quad \quad \quad \quad ~~~ + \log \left[\frac{p(x_{k}|y',\mathcal{R})}{p(x_{k}|y')}\frac{p(x_{k}|y',\mathcal{C})}{p(x_{k}|y')}\right]^{\kappa}\\
        & = \log{p(x_{k}|y')} +      \kappa \log\left[\frac{p(x_{k}|y',\mathcal{R})}{p(x_{k}|y')}\frac{p(x_{k}|y',\mathcal{C})}{p(x_{k}|y')}\right]\\
        & = \log{p(x_{k}|y')}+ 
       \kappa \big[\log(p(x_{k}|y',\mathcal{R}))- \log{p(x_{k}|y')}\big] \\
        &\quad \quad \quad\quad \quad ~~ +\kappa\big[\log(p(x_{k}|y',\mathcal{C}))-\log{p(x_{k}|y')}\big].      
    \end{aligned}
\end{equation}

\begin{equation} 
    \begin{aligned}
        \label{equ_17_s}        &\bigtriangledown_{x_{k}}\log{p\big(x_{k}|y',\mathcal{R},\mathcal{C}\big)} \propto \bigtriangledown_{x_{k}}\log{p(x_{k}|y')} \\
        & \quad \quad   +\kappa\left[\bigtriangledown_{x_{k}}\log(p(x_{k}|y',\mathcal{R})- \bigtriangledown_{x_{k}}\log{p(x_{k}|y')}\right] \\
        & \quad \quad  +\kappa[\bigtriangledown_{x_{k}}\log(p(x_{k}|y',\mathcal{C}))-\bigtriangledown_{x_{k}}\log{p(x_{k}|y')}].
    \end{aligned}
\end{equation}
Building on the conclusion of previous work~\cite{luo2022understanding,songdenoising} that $\bigtriangledown_{x_{k}}\log{p\big(x_{k}|y',\mathcal{R},\mathcal{C}\big)}\propto-\epsilon_{\theta}(x_{k},y',\mathcal{R},\mathcal{C},k)$, we derive multiple conditional for the perturbation noise from Eq.~\ref{equ_17_s}, as shown below:
\begin{equation} 
    \begin{aligned}
        \label{equ_18_s}        
        &\epsilon_{\theta}(x_{k},y'(\tau_c,\tau),\mathcal{R},\mathcal{C},k) \propto \epsilon_{\theta}(x_{k},y',\varnothing,k)\\
        & \quad \quad \quad \quad \quad \quad  + \kappa[\epsilon_{\theta}(x_{k},y',\mathcal{R},k)-\epsilon_{\theta}(x_{k},y',\varnothing,k)] \\
        & \quad \quad \quad \quad \quad \quad + \kappa[\epsilon_{\theta}(x_{k},y',\mathcal{C},k)-\epsilon_{\theta}(x_{k},y',\varnothing,k)].
    \end{aligned}
\end{equation}

Based on the above derivations and analysis, we employ the following perturbation noise to sample from $q\big(x_{k}(\tau_c)|y'(\tau_c,\tau),\mathcal{R}(\tau_c),\mathcal{C}(\tau_c)\big) $ under classifier-free for the reward and cost conditions:
\begin{equation} 
    \begin{aligned}
        \label{equ_19_s}
        &\hat{\epsilon}=\kappa\Big[\epsilon_{\theta}\big(x_{k}(\tau_{c}),y'(\tau_{c},\tau),\mathcal{R}(\tau_{c}),k\big) \\
        & \quad +\epsilon_{\theta}\big(x_{k}(\tau_{c}),y'(\tau_{c},\tau),\mathcal{C}(\tau_{c}),k\big)\Big]\\
        &\quad +(1-2\kappa)\epsilon_{\theta}\big(x_{k}(\tau_{c}),y'(\tau_{c},\tau),\varnothing,k\big).
    \end{aligned}
\end{equation}
In conclusion, Proposition~\ref{pro:class-free_s} is proven.

\begin{table}[htbp!] \small
\centering
\caption{The primary network parameters and hyperparameters of the CDCP algorithm's model.}  
\setlength{\tabcolsep}{5mm}{
    \begin{tabular}{cc}
    \hline
    \hline
    Sort                                                                                           & Setting               \\ \hline
    \multirow{9}{*}{Parameters}                                                              & $\gamma$=0.99            \\
                                                                                                   & batch\_size=64        \\
                                                                                                   & $\lambda$=2e-4   \\
                                                                                                   & $\eta$=0.12              \\
                                                                                                   & $K$=200                 \\
                                                                                                   & $h$=32                  \\
                                                                                                   & $\chi$=20                  \\
                                                                                                   & $l$ =2 \\
                                                                                                   & $p_{_\beta}\!\!=\!0.8$ \\
                                                                                                   \\[-8pt]
                                                                                                   \hline
                                                                                                   \\[-8pt]
    \multirow{7}{*}{\begin{tabular}[c]{@{}c@{}}Transformer for Diffusion\\ GPT2Model\end{tabular}} & n\_layer=4            \\
                                                                                                   & n\_head=2             \\
                                                                                                   & n\_inner=1024         \\
                                                                                                   & n\_positon=1024       \\
                                                                                                   & n\_ctx=1023           \\
                                                                                                   & reside\_pdrop=0.1     \\
                                                                                                   & attn\_pdrop=0.1       \\
                                                                                                   \\[-8pt]
                                                                                                   \hline
                                                                                                   \\ [-8pt]
    Pre-trained LLM Model                                                                          & Bertmodel             \\
    MLP for K Embedding                                                                            & {[}256,512,256{]} \\
    MLP for R Embedding                                                                            & {[}1,256,512,256{]}   \\
    MLP for C Embedding                                                                            & {[}1,256,512,256{]}   \\
    MLP for Text Embedding                                                                         & {[}4,256,512,256{]}   \\
    MLP for Prom Embedding                                                                         & {[}261,256,512,256{]} \\
    MLP for Obs Embedding                                                                          & {[}259,256,512,256{]} \\ \hline
    \end{tabular}
    }
\label{tab:hyper-parameters}
\end{table}

\subsection{Experimental Details}
\label{sup: experimental}
\subsubsection{Experimental Setting}
The experiment is conducted on a server equipped with L40s, utilizing PyTorch for model construction, training, and testing. Table 2 outlines the primary network parameters of the model implemented by our CDCP algorithm. Furthermore, the code appendix contains detailed configuration information for the testing environment. For instructions on safely setting up and configuring the training and testing environment for the CDCP model, please refer to the README file in the appendix.

\begin{figure*}[htp!]
  \centering
  \begin{subfigure}[b]{0.8\textwidth}
    \includegraphics[width=\textwidth]{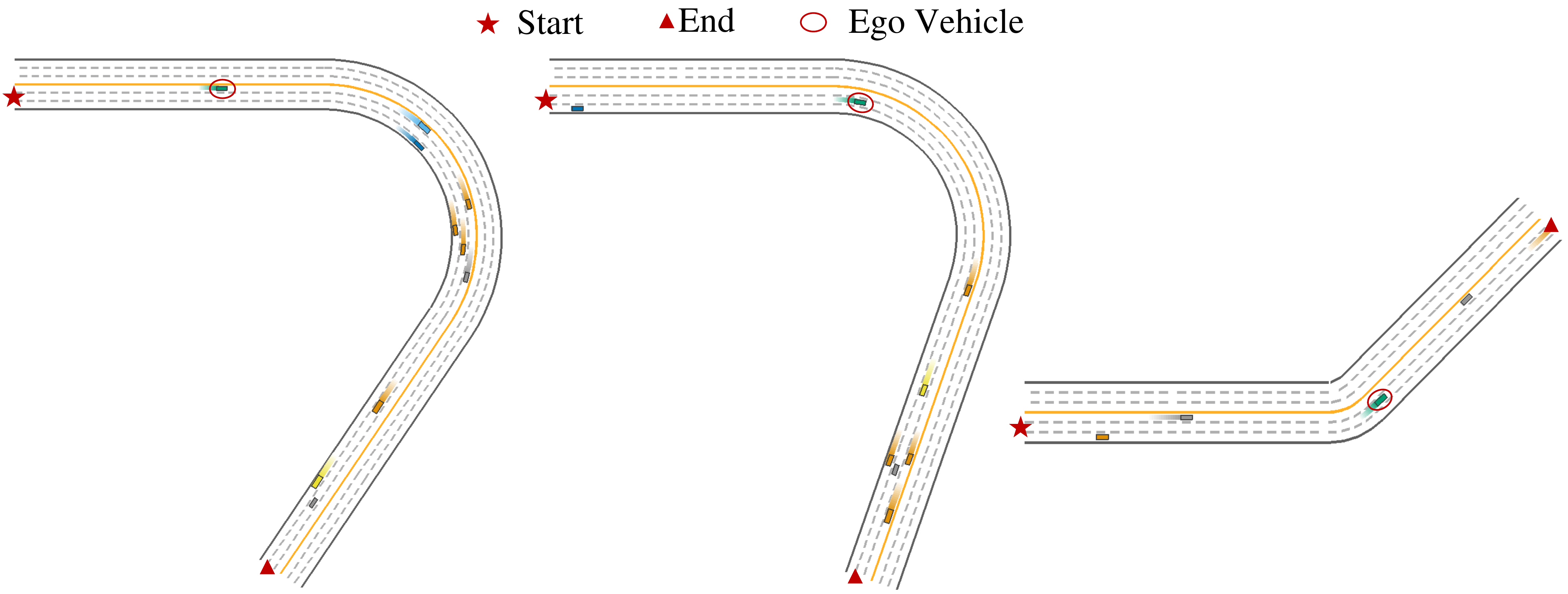}
    \caption{Sparse}
  \end{subfigure}%
  
  \begin{subfigure}[b]{0.8\textwidth}
    \includegraphics[width=\textwidth]{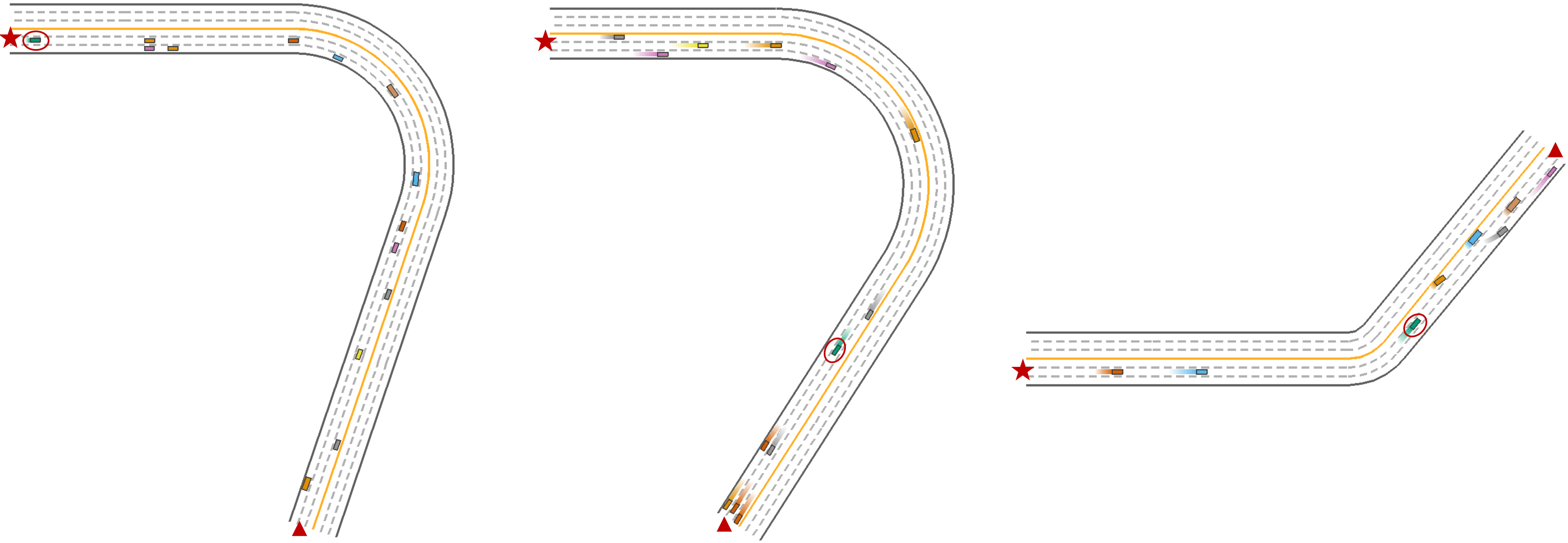}
    \caption{Mean}
  \end{subfigure}%
  
  \begin{subfigure}[b]{0.8\textwidth}
    \includegraphics[width=\textwidth]{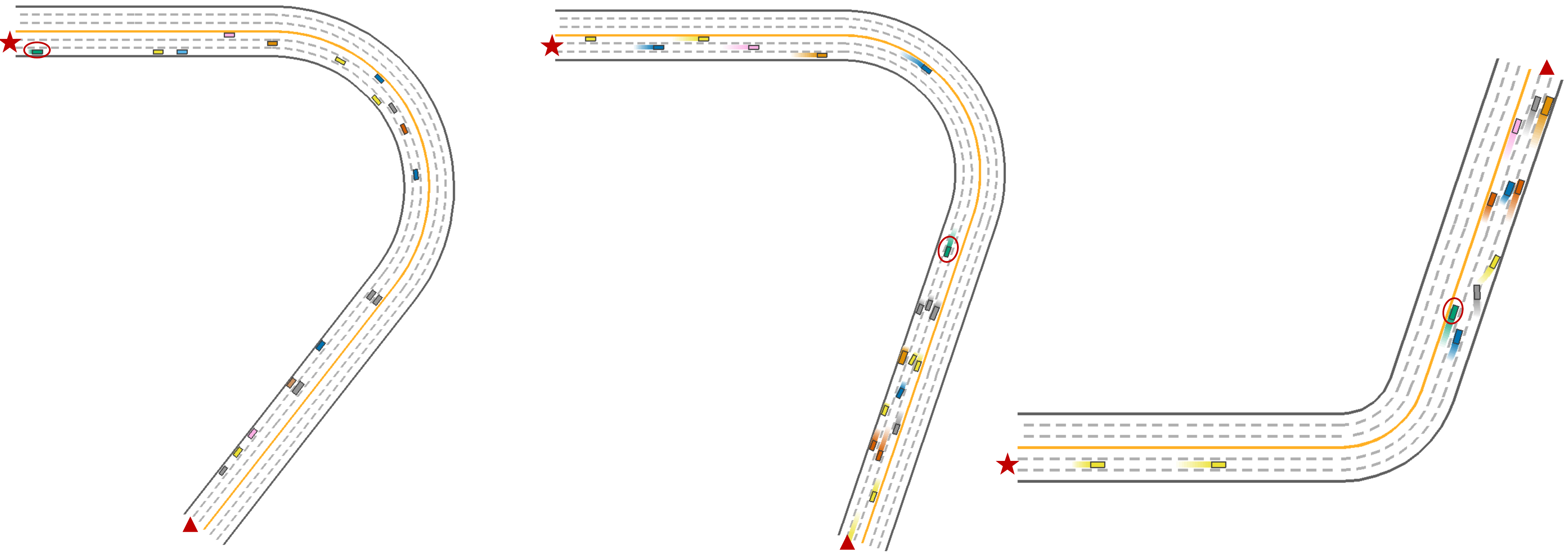}
    \caption{Dense}
  \end{subfigure}
  \caption{
  Visualization of driving scenarios with different traffic densities in the \textit{easy} traffic scenario. The pentagram represents the road where the ego vehicle is generated; the vehicle within the oval region is the ego vehicle, and the triangle indicates the exit for the ego vehicle. (a) Visualized traffic scenarios under \textit{sparse} traffic flow, (b) visualized traffic scenarios under \textit{mean} traffic flow, and (c) visualized traffic scenarios under \textit{dense} traffic flow.
}
  \vspace{-0.2cm}
  \label{fig: easy}
\end{figure*}

\subsubsection{Tasks for Experimental}
\label{sup: experimental_task}
Our experiments select nine safety-constrained autonomous driving scenarios encompassing diverse driving situations and traffic densities. These scenarios include environments such as curves, intersections, T-junctions, and roundabouts. Regarding traffic density, the tasks cover sparse, medium, and dense traffic flow. Fig.~\ref{fig: easy} presents the visualization results of the traffic scenarios under \textit{sparse}, \textit{mean}, and \textit{dense} traffic flows within the easy scenario. This \textit{easy} scenario features curves with varying curvatures, where the ego vehicle is generated as one end of the road with random speed and lane assignment. The objective of the autonomous vehicle in this scenario is to reach the other end of the road without collisions while adhering to traffic rules at an appropriate speed. Fig.~\ref{fig: medium} shows the visualization results of the traffic scenarios under \textit{sparse}, \textit{mean}, \textit{dense} traffic flows in the \textit{medium} scenario. Those scenarios include T-junctions and cross-intersections, with the ego vehicle generated at multiple intersections with random speed and lane assignment. The objective of the ego vehicle in this scenario is to pass through one or more intersections without collisions and reach the exit at an appropriate speed. Fig.~\ref{fig: hard} illustrates the visualization of traffic scenarios under varying traffic densities in the \textit{dense} scenario. This scenario features T-junctions, roundabouts, and ramps, where the ego vehicle is generated at multiple intersections with random speeds and lane assignments. The goal for the autonomous vehicle in this scenario is to safely navigate through the intersections and roundabouts at an appropriate speed and reach the exit without collisions.

Additionally, the results depicted in Fig.~\ref{fig: easy}, \ref{fig: medium}, and \ref{fig: hard} indicated that dense traffic flow has significantly more vehicles in the same traffic scenarios compared to sparse traffic flow with medium traffic flow having a moderate number of traffic vehicles. Furthermore, the vehicles in all nine scenarios display intelligent behaviors, including autonomous lane changing and acceleration/deceleration.

\begin{figure*}[htp!]
  \centering
  \begin{subfigure}[b]{1\textwidth}
    \includegraphics[width=\textwidth]{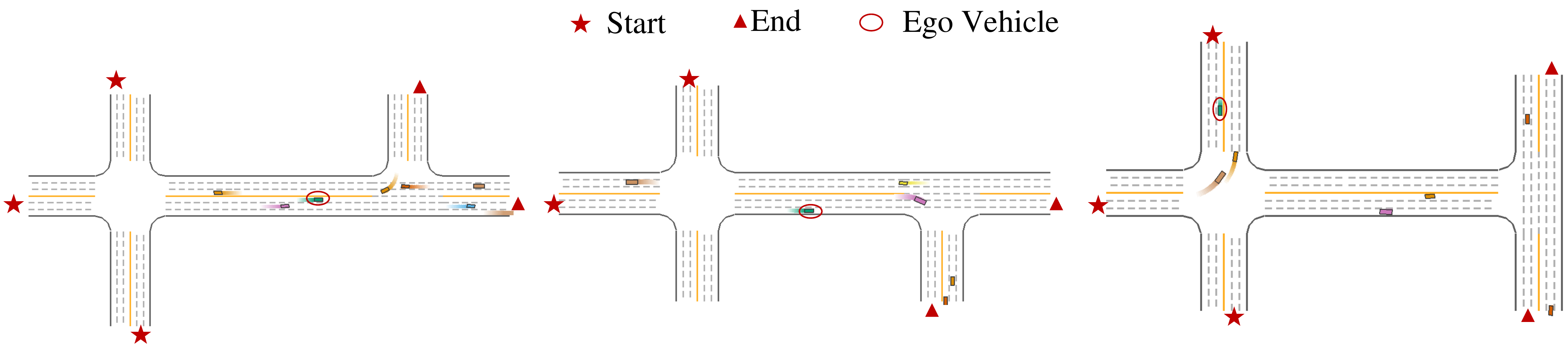}
    \caption{Sparse}
  \end{subfigure}%
  
  \begin{subfigure}[b]{1\textwidth}
    \includegraphics[width=\textwidth]{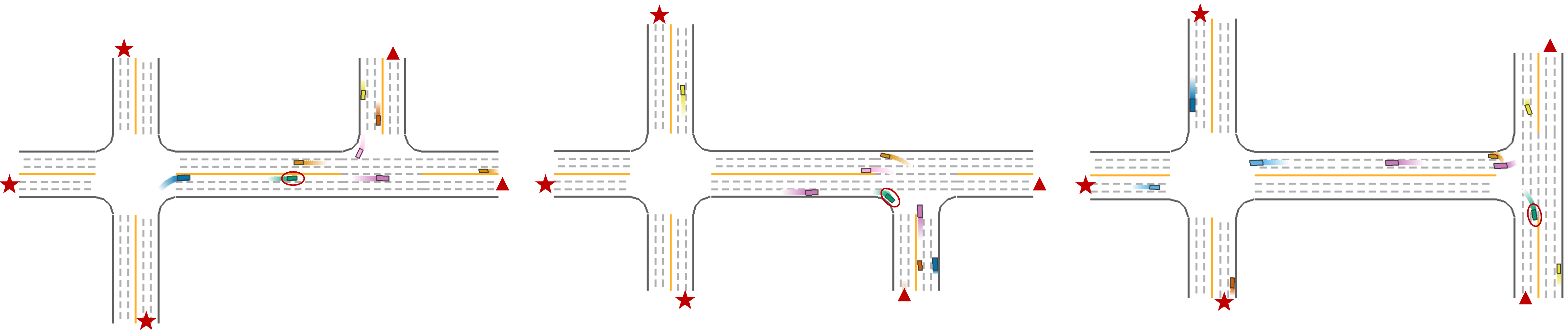}
    \caption{Mean}
  \end{subfigure}%
  
  \begin{subfigure}[b]{1\textwidth}
    \includegraphics[width=\textwidth]{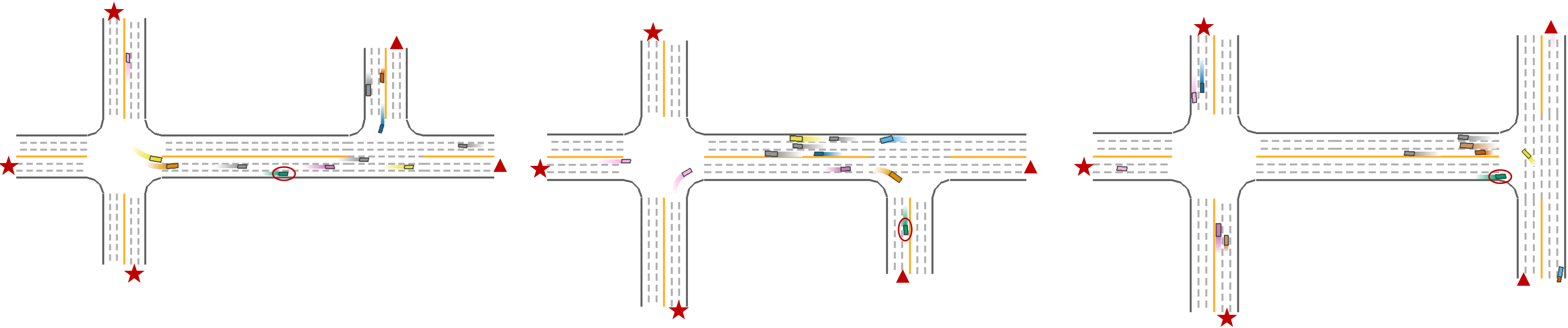}
    \caption{Dense}
  \end{subfigure}
  \caption{Visualization of driving scenarios with different traffic densities in the \textit{medium} traffic scenario. The pentagram represents the road where the ego vehicle is generated; the vehicle within the oval region is the ego vehicle, and the triangle indicates the exit for the ego vehicle. (a) Visualized traffic scenarios under \textit{sparse} traffic flow, (b) visualized traffic scenarios under \textit{mean} traffic flow, and (c) visualized traffic scenarios under \textit{dense} traffic flow.}
  \vspace{-0.2cm}
  \label{fig: medium}
\end{figure*}

\begin{figure*}[htp!]
  \centering
  \begin{subfigure}[b]{1\textwidth}
    \includegraphics[width=\textwidth]{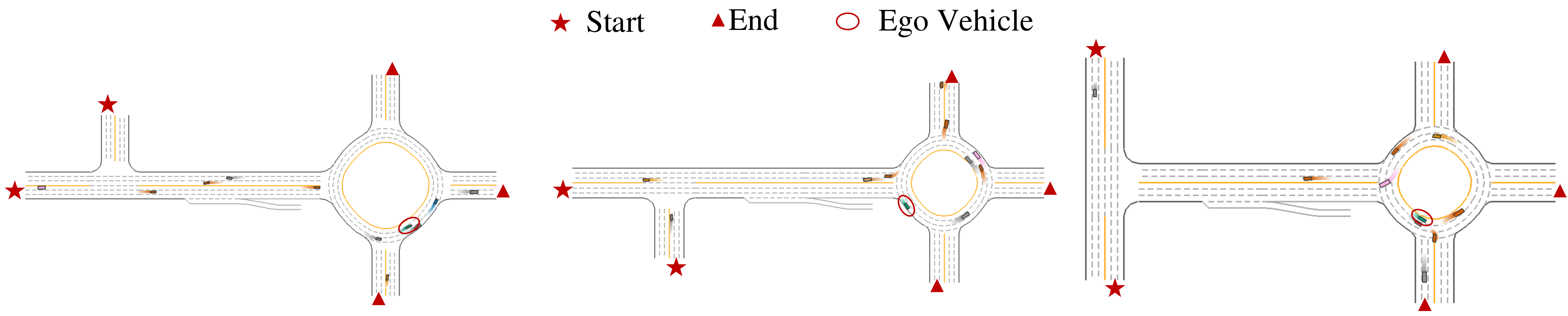}
    \caption{Sparse}
  \end{subfigure}%
  
  \begin{subfigure}[b]{1\textwidth}
    \includegraphics[width=\textwidth]{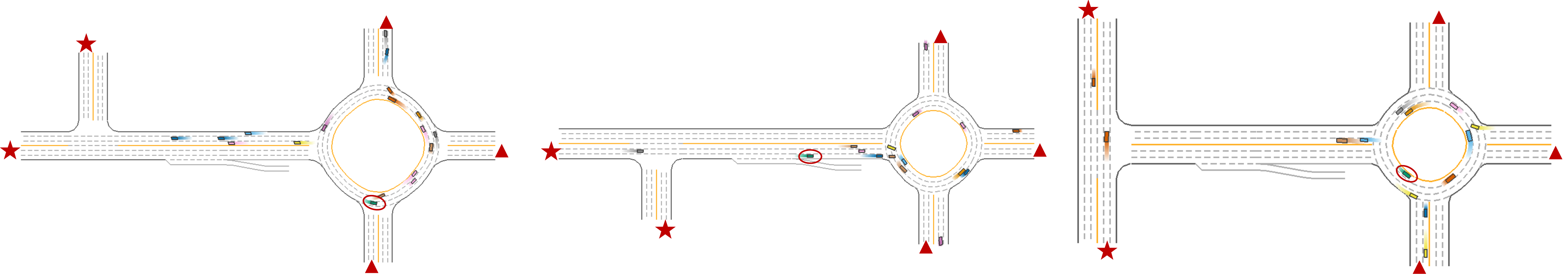}
    \caption{Mean}
  \end{subfigure}%
  
  \begin{subfigure}[b]{1\textwidth}
    \includegraphics[width=\textwidth]{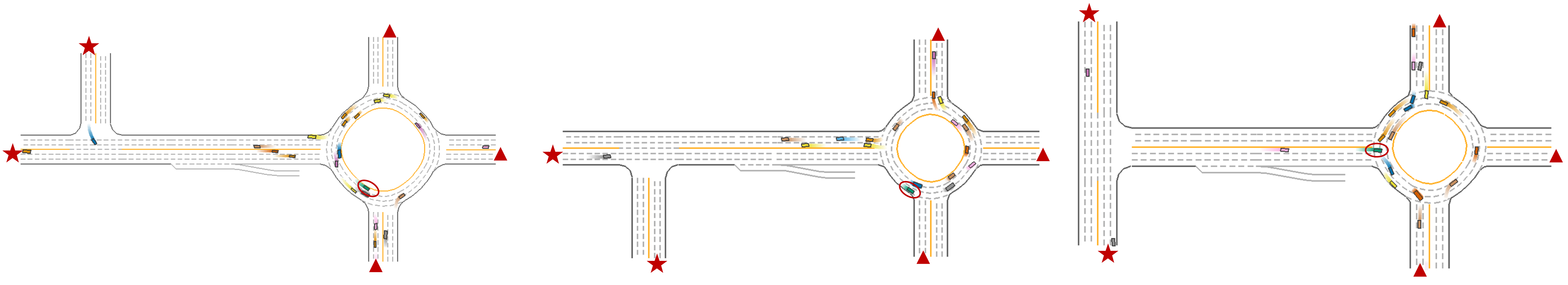}
    \caption{Dense}
  \end{subfigure}
  \caption{Visualization of driving scenarios with different traffic densities in the \textit{hard} traffic scenario. The pentagram represents the road where the ego vehicle is generated; the vehicle within the oval region is the ego vehicle, and the triangle indicates the exit for the ego vehicle. (a) Visualized traffic scenarios under \textit{sparse} traffic flow, (b) visualized traffic scenarios under \textit{mean} traffic flow, and (c) visualized traffic scenarios under \textit{dense} traffic flow.}
  \vspace{-0.2cm}
  \label{fig: hard}
\end{figure*}

%% file: ref.bib
@article{nie2018longitudinal,
  title={Longitudinal speed control of autonomous vehicle based on a self-adaptive PID of radial basis function neural network},
  author={Nie, Linzhen and Guan, Jiayi and Lu, Chihua and Zheng, Hao and Yin, Zhishuai},
  journal={IET Intelligent Transport Systems},
  volume={12},
  number={6},
  pages={485--494},
  year={2018},
  publisher={Wiley Online Library}
}

@article{guan2022discrete,
  title={A discrete soft actor-critic decision-making strategy with sample filter for freeway autonomous driving},
  author={Guan, Jiayi and Chen, Guang and Huang, Jin and Li, Zhijun and Xiong, Lu and Hou, Jing and Knoll, Alois},
  journal={IEEE Transactions on Vehicular Technology},
  volume={72},
  number={2},
  pages={2593--2598},
  year={2022},
  publisher={IEEE}
}

@article{li2022metadrive,
  title={Metadrive: Composing diverse driving scenarios for generalizable reinforcement learning},
  author={Li, Quanyi and Peng, Zhenghao and Feng, Lan and Zhang, Qihang and Xue, Zhenghai and Zhou, Bolei},
  journal={IEEE transactions on pattern analysis and machine intelligence},
  volume={45},
  number={3},
  pages={3461--3475},
  year={2022},
  publisher={IEEE}
}

@article{huang2022curriculum,
  title={Curriculum-based asymmetric multi-task reinforcement learning},
  author={Huang, Hanchi and Ye, Deheng and Shen, Li and Liu, Wei},
  journal={IEEE transactions on pattern analysis and machine intelligence},
  volume={45},
  number={6},
  pages={7258--7269},
  year={2022},
  publisher={IEEE}
}

@article{yang2020multi,
  title={Multi-task reinforcement learning with soft modularization},
  author={Yang, Ruihan and Xu, Huazhe and Wu, Yi and Wang, Xiaolong},
  journal={Advances in Neural Information Processing Systems},
  volume={33},
  pages={4767--4777},
  year={2020}
}

@article{sun2022paco,
  title={Paco: Parameter-compositional multi-task reinforcement learning},
  author={Sun, Lingfeng and Zhang, Haichao and Xu, Wei and Tomizuka, Masayoshi},
  journal={Advances in Neural Information Processing Systems},
  volume={35},
  pages={21495--21507},
  year={2022}
}

@ARTICLE{zhang2022residual,
  author={Zhang, Ruiqi and Hou, Jing and Chen, Guang and Li, Zhijun and Chen, Jianxiao and Knoll, Alois},
  journal={IEEE Robotics and Automation Letters}, 
  title={Residual Policy Learning Facilitates Efficient Model-Free Autonomous Racing}, 
  year={2022},
  volume={7},
  number={4},
  pages={11625-11632},
  doi={10.1109/LRA.2022.3192770}}

@article{chen2020just,
  title={Just pick a sign: Optimizing deep multitask models with gradient sign dropout},
  author={Chen, Zhao and Ngiam, Jiquan and Huang, Yanping and Luong, Thang and Kretzschmar, Henrik and Chai, Yuning and Anguelov, Dragomir},
  journal={Advances in Neural Information Processing Systems},
  volume={33},
  pages={2039--2050},
  year={2020}
}

@article{yu2021conservative,
  title={Conservative data sharing for multi-task offline reinforcement learning},
  author={Yu, Tianhe and Kumar, Aviral and Chebotar, Yevgen and Hausman, Karol and Levine, Sergey and Finn, Chelsea},
  journal={Advances in Neural Information Processing Systems},
  volume={34},
  pages={11501--11516},
  year={2021}
}

@article{liu2021conflict,
  title={Conflict-averse gradient descent for multi-task learning},
  author={Liu, Bo and Liu, Xingchao and Jin, Xiaojie and Stone, Peter and Liu, Qiang},
  journal={Advances in Neural Information Processing Systems},
  volume={34},
  pages={18878--18890},
  year={2021}
}

@inproceedings{sodhani2021multi,
  title={Multi-task reinforcement learning with context-based representations},
  author={Sodhani, Shagun and Zhang, Amy and Pineau, Joelle},
  booktitle={International Conference on Machine Learning},
  pages={9767--9779},
  year={2021},
  organization={PMLR}
}

@article{cheng2023multi,
  title={Multi-task reinforcement learning with attention-based mixture of experts},
  author={Cheng, Guangran and Dong, Lu and Cai, Wenzhe and Sun, Changyin},
  journal={IEEE Robotics and Automation Letters},
  year={2023},
  publisher={IEEE}
}

@article{adjei2024safe,
  title={Safe Reinforcement Learning for Arm Manipulation with Constrained Markov Decision Process},
  author={Adjei, Patrick and Tasfi, Norman and Gomez-Rosero, Santiago and Capretz, Miriam AM},
  journal={Robotics},
  volume={13},
  number={4},
  pages={63},
  year={2024},
  publisher={MDPI}
}

@article{wachi2024safe,
  title={Safe exploration in reinforcement learning: A generalized formulation and algorithms},
  author={Wachi, Akifumi and Hashimoto, Wataru and Shen, Xun and Hashimoto, Kazumune},
  journal={Advances in Neural Information Processing Systems},
  volume={36},
  year={2024}
}

@article{he2024toward,
  title={Toward trustworthy decision-making for autonomous vehicles: A robust reinforcement learning approach with safety guarantees},
  author={He, Xiangkun and Huang, Wenhui and Lv, Chen},
  journal={Engineering},
  volume={33},
  pages={77--89},
  year={2024},
  publisher={Elsevier}
}

@article{zhang2021provably,
  title={Provably efficient multi-task reinforcement learning with model transfer},
  author={Zhang, Chicheng and Wang, Zhi},
  journal={Advances in Neural Information Processing Systems},
  volume={34},
  pages={19771--19783},
  year={2021}
}

@article{chen2021multi,
  title={A multi-task-learning-based transfer deep reinforcement learning design for autonomic optical networks},
  author={Chen, Xiaoliang and Proietti, Roberto and Liu, Che-Yu and Yoo, SJ Ben},
  journal={IEEE Journal on Selected Areas in Communications},
  volume={39},
  number={9},
  pages={2878--2889},
  year={2021},
  publisher={IEEE}
}

@inproceedings{hessel2019multi,
  title={Multi-task deep reinforcement learning with popart},
  author={Hessel, Matteo and Soyer, Hubert and Espeholt, Lasse and Czarnecki, Wojciech and Schmitt, Simon and Van Hasselt, Hado},
  booktitle={Proceedings of the AAAI Conference on Artificial Intelligence},
  volume={33},
  number={01},
  pages={3796--3803},
  year={2019}
}

@article{tomov2021multi,
  title={Multi-task reinforcement learning in humans},
  author={Tomov, Momchil S and Schulz, Eric and Gershman, Samuel J},
  journal={Nature Human Behaviour},
  volume={5},
  number={6},
  pages={764--773},
  year={2021},
  publisher={Nature Publishing Group UK London}
}

@article{parvini2023aoi,
  title={AoI-aware resource allocation for platoon-based C-V2X networks via multi-agent multi-task reinforcement learning},
  author={Parvini, Mohammad and Javan, Mohammad Reza and Mokari, Nader and Abbasi, Bijan and Jorswieck, Eduard A},
  journal={IEEE Transactions on Vehicular Technology},
  volume={72},
  number={8},
  pages={9880--9896},
  year={2023},
  publisher={IEEE}
}

@article{li2020multi,
  title={Multi-task batch reinforcement learning with metric learning},
  author={Li, Jiachen and Vuong, Quan and Liu, Shuang and Liu, Minghua and Ciosek, Kamil and Christensen, Henrik and Su, Hao},
  journal={Advances in Neural Information Processing Systems},
  volume={33},
  pages={6197--6210},
  year={2020}
}

@inproceedings{d2020sharing,
  title={Sharing Knowledge in Multi-Task Deep Reinforcement Learning},
  author={D'Eramo, C and Tateo, D and Bonarini, A and Restelli, M and Peters, J and others},
  booktitle={8th International Conference on Learning Representations,$\{$ICLR$\}$ 2020, Addis Ababa, Ethiopia, April 26-30, 2020},
  pages={1--11},
  year={2020},
  organization={International Conference on Learning Representations, ICLR}
}

@article{lee2022coptidice,
  title={Coptidice: Offline constrained reinforcement learning via stationary distribution correction estimation},
  author={Lee, Jongmin and Paduraru, Cosmin and Mankowitz, Daniel J and Heess, Nicolas and Precup, Doina and Kim, Kee-Eung and Guez, Arthur},
  journal={arXiv preprint arXiv:2204.08957},
  year={2022}
}

@inproceedings{polosky2022constrained,
  title={Constrained offline policy optimization},
  author={Polosky, Nicholas and Da Silva, Bruno C and Fiterau, Madalina and Jagannath, Jithin},
  booktitle={International Conference on Machine Learning},
  pages={17801--17810},
  year={2022},
  organization={PMLR}
}

@inproceedings{guan2024poce,
  title={POCE: Primal Policy Optimization with Conservative Estimation for Multi-constraint Offline Reinforcement Learning},
  author={Guan, Jiayi and Shen, Li and Zhou, Ao and Li, Lusong and Hu, Han and He, Xiaodong and Chen, Guang and Jiang, Changjun},
  booktitle={Proceedings of the IEEE/CVF Conference on Computer Vision and Pattern Recognition},
  pages={26243--26253},
  year={2024}
}

@article{peng2021learning,
  title={Learning to simulate self-driven particles system with coordinated policy optimization},
  author={Peng, Zhenghao and Li, Quanyi and Hui, Ka Ming and Liu, Chunxiao and Zhou, Bolei},
  journal={Advances in Neural Information Processing Systems},
  volume={34},
  pages={10784--10797},
  year={2021}
}

@inproceedings{peng2022safe,
  title={Safe driving via expert guided policy optimization},
  author={Peng, Zhenghao and Li, Quanyi and Liu, Chunxiao and Zhou, Bolei},
  booktitle={Conference on Robot Learning},
  pages={1554--1563},
  year={2022},
  organization={PMLR}
}

@inproceedings{chen2018gradnorm,
  title={Gradnorm: Gradient normalization for adaptive loss balancing in deep multitask networks},
  author={Chen, Zhao and Badrinarayanan, Vijay and Lee, Chen-Yu and Rabinovich, Andrew},
  booktitle={International conference on machine learning},
  pages={794--803},
  year={2018},
  organization={PMLR}
}

@inproceedings{NEURIPS2020_4c5bcfec,
     author = {Ho, Jonathan and Jain, Ajay and Abbeel, Pieter},
     booktitle = {Advances in Neural Information Processing Systems},
     editor = {H. Larochelle and M. Ranzato and R. Hadsell and M.F. Balcan and H. Lin},
     pages = {6840--6851},
     publisher = {Curran Associates, Inc.},
     title = {Denoising Diffusion Probabilistic Models},
     url = {https://proceedings.neurips.cc/paper_files/paper/2020/file/4c5bcfec8584af0d967f1ab10179ca4b-Paper.pdf},
     volume = {33},
     year = {2020}
}

@inproceedings{li2021efficient,
  title={Efficient Learning of Safe Driving Policy via Human-AI Copilot Optimization},
  author={Li, Quanyi and Peng, Zhenghao and Zhou, Bolei},
  booktitle={International Conference on Learning Representations},
  year={2021}
}

@inproceedings{shridhar2023perceiver,
  title={Perceiver-actor: A multi-task transformer for robotic manipulation},
  author={Shridhar, Mohit and Manuelli, Lucas and Fox, Dieter},
  booktitle={Conference on Robot Learning},
  pages={785--799},
  year={2023},
  organization={PMLR}
}

@article{liang2022effective,
  title={Effective adaptation in multi-task co-training for unified autonomous driving},
  author={Liang, Xiwen and Wu, Yangxin and Han, Jianhua and Xu, Hang and Xu, Chunjing and Liang, Xiaodan},
  journal={Advances in Neural Information Processing Systems},
  volume={35},
  pages={19645--19658},
  year={2022}
}

@inproceedings{kalashnikov2021scaling,
  title={Scaling up multi-task robotic reinforcement learning},
  author={Kalashnikov, Dmitry and Varley, Jake and Chebotar, Yevgen and Swanson, Benjamin and Jonschkowski, Rico and Finn, Chelsea and Levine, Sergey and Hausman, Karol},
  booktitle={5th Annual Conference on Robot Learning},
  year={2021}
}

@article{lee2022multi,
  title={Multi-game decision transformers},
  author={Lee, Kuang-Huei and Nachum, Ofir and Yang, Mengjiao Sherry and Lee, Lisa and Freeman, Daniel and Guadarrama, Sergio and Fischer, Ian and Xu, Winnie and Jang, Eric and Michalewski, Henryk and others},
  journal={Advances in Neural Information Processing Systems},
  volume={35},
  pages={27921--27936},
  year={2022}
}

@inproceedings{d2019sharing,
  title={Sharing Knowledge in Multi-Task Deep Reinforcement Learning},
  author={D'Eramo, Carlo and Tateo, Davide and Bonarini, Andrea and Restelli, Marcello and Peters, Jan},
  booktitle={International Conference on Learning Representations},
  year={2019}
}

@inproceedings{sarafian2021recomposing,
  title={Recomposing the reinforcement learning building blocks with hypernetworks},
  author={Sarafian, Elad and Keynan, Shai and Kraus, Sarit},
  booktitle={International Conference on Machine Learning},
  pages={9301--9312},
  year={2021},
  organization={PMLR}
}

@article{ho2020denoising,
  title={Denoising diffusion probabilistic models},
  author={Ho, Jonathan and Jain, Ajay and Abbeel, Pieter},
  journal={Advances in neural information processing systems},
  volume={33},
  pages={6840--6851},
  year={2020}
}

@inproceedings{he2023diffusion,
    title={Diffusion Model is an Effective Planner and Data Synthesizer for Multi-Task Reinforcement Learning},
    author={Haoran He and Chenjia Bai and Kang Xu and Zhuoran Yang and Weinan Zhang and Dong Wang and Bin Zhao and Xuelong Li},
    booktitle={Thirty-seventh Conference on Neural Information Processing Systems},
    year={2023},
    url={https://openreview.net/forum?id=fAdMly4ki5}
}

@inproceedings{ajay2022conditional,
  title={Is Conditional Generative Modeling all you need for Decision-Making?},
  author={Ajay, Anurag and Du, Yilun and Gupta, Abhi and Tenenbaum, Joshua B and Jaakkola, Tommi S and Agrawal, Pulkit},
  booktitle={NeurIPS 2022 Foundation Models for Decision Making Workshop},
  year={2022}
}

@inproceedings{liu2023constrained,
  title={Constrained decision transformer for offline safe reinforcement learning},
  author={Liu, Zuxin and Guo, Zijian and Yao, Yihang and Cen, Zhepeng and Yu, Wenhao and Zhang, Tingnan and Zhao, Ding},
  booktitle={International Conference on Machine Learning},
  pages={21611--21630},
  year={2023},
  organization={PMLR}
}

@inproceedings{lee2021optidice,
  title={Optidice: Offline policy optimization via stationary distribution correction estimation},
  author={Lee, Jongmin and Jeon, Wonseok and Lee, Byungjun and Pineau, Joelle and Kim, Kee-Eung},
  booktitle={International Conference on Machine Learning},
  pages={6120--6130},
  year={2021},
  organization={PMLR}
}

@inproceedings{varghese2021optimization,
  title={Optimization of deep reinforcement learning with hybrid multi-task learning},
  author={Varghese, Nelson Vithayathil and Mahmoud, Qusay H},
  booktitle={2021 IEEE International Systems Conference (SysCon)},
  pages={1--8},
  year={2021},
  organization={IEEE}
}

@inproceedings{le2019batch,
  title={Batch policy learning under constraints},
  author={Le, Hoang and Voloshin, Cameron and Yue, Yisong},
  booktitle={International Conference on Machine Learning},
  pages={3703--3712},
  year={2019},
  organization={PMLR}
}

@inproceedings{lee2021coptidice,
  title={COptiDICE: Offline Constrained Reinforcement Learning via Stationary Distribution Correction Estimation},
  author={Lee, Jongmin and Paduraru, Cosmin and Mankowitz, Daniel J and Heess, Nicolas and Precup, Doina and Kim, Kee-Eung and Guez, Arthur},
  booktitle={International Conference on Learning Representations},
  year={2022}
}

@article{dai2020coindice,
  title={Coindice: Off-policy confidence interval estimation},
  author={Dai, Bo and Nachum, Ofir and Chow, Yinlam and Li, Lihong and Szepesv{\'a}ri, Csaba and Schuurmans, Dale},
  journal={Advances in neural information processing systems},
  volume={33},
  pages={9398--9411},
  year={2020}
}

@inproceedings{zhang2019gendice,
  title={GenDICE: Generalized Offline Estimation of Stationary Values},
  author={Zhang, Ruiyi and Dai, Bo and Li, Lihong and Schuurmans, Dale},
  booktitle={International Conference on Learning Representations},
  year={2019}
}

@article{liu2023datasets,
  title={Datasets and Benchmarks for Offline Safe Reinforcement Learning},
  author={Liu, Zuxin and Guo, Zijian and Lin, Haohong and Yao, Yihang and Zhu, Jiacheng and Cen, Zhepeng and Hu, Hanjiang and Yu, Wenhao and Zhang, Tingnan and Tan, Jie and others},
  journal={arXiv preprint arXiv:2306.09303},
  year={2023}
}

@inproceedings{songdenoising,
  title={Denoising Diffusion Implicit Models},
  author={Song, Jiaming and Meng, Chenlin and Ermon, Stefano},
  booktitle={International Conference on Learning Representations},
  year={2022}
}

@article{xu2022constraints,
  title={Constraints penalized q-learning for safe offline reinforcement learning},
  author={Xu, Haoran and Zhan, Xianyuan and Zhu, Xiangyu},
  journal={Proceedings of the AAAI Conference on Artificial Intelligence},
  volume={36},
  number={8},
  pages={8753--8760},
  year={2022}
}

@article{luo2022understanding,
  title={Understanding diffusion models: A unified perspective},
  author={Luo, Calvin},
  journal={arXiv preprint arXiv:2208.11970},
  year={2022}
}

@article{kumar2020conservative,
  title={Conservative q-learning for offline reinforcement learning},
  author={Kumar, Aviral and Zhou, Aurick and Tucker, George and Levine, Sergey},
  journal={Advances in Neural Information Processing Systems},
  volume={33},
  pages={1179--1191},
  year={2020}
}

@inproceedings{ishihara2021multi,
  title={Multi-task learning with attention for end-to-end autonomous driving},
  author={Ishihara, Keishi and Kanervisto, Anssi and Miura, Jun and Hautamaki, Ville},
  booktitle={Proceedings of the IEEE/CVF conference on computer vision and pattern recognition},
  pages={2902--2911},
  year={2021}
}

@inproceedings{ze2023gnfactor,
  title={Gnfactor: Multi-task real robot learning with generalizable neural feature fields},
  author={Ze, Yanjie and Yan, Ge and Wu, Yueh-Hua and Macaluso, Annabella and Ge, Yuying and Ye, Jianglong and Hansen, Nicklas and Li, Li Erran and Wang, Xiaolong},
  booktitle={Conference on Robot Learning},
  pages={284--301},
  year={2023},
  organization={PMLR}
}

@inproceedings{NEURIPS2022_f376f5df,
     author = {Yoo, Minjong and Cho, SangWoo and Woo, Honguk},
     booktitle = {Advances in Neural Information Processing Systems},
     editor = {S. Koyejo and S. Mohamed and A. Agarwal and D. Belgrave and K. Cho and A. Oh},
     pages = {37432--37444},
     publisher = {Curran Associates, Inc.},
     title = {Skills Regularized Task Decomposition for Multi-task Offline Reinforcement Learning},
     url = {https://proceedings.neurips.cc/paper_files/paper/2022/file/f376f5dff6f6ec6364aea7a46ab49574-Paper-Conference.pdf},
     volume = {35},
     year = {2022}
}

@inproceedings{yu2020meta,
  title={Meta-world: A benchmark and evaluation for multi-task and meta reinforcement learning},
  author={Yu, Tianhe and Quillen, Deirdre and He, Zhanpeng and Julian, Ryan and Hausman, Karol and Finn, Chelsea and Levine, Sergey},
  booktitle={Conference on robot learning},
  pages={1094--1100},
  year={2020},
  organization={PMLR}
}

@inproceedings{xie2022lifelong,
  title={Lifelong robotic reinforcement learning by retaining experiences},
  author={Xie, Annie and Finn, Chelsea},
  booktitle={Conference on Lifelong Learning Agents},
  pages={838--855},
  year={2022},
  organization={PMLR}
}

@inproceedings{NEURIPS2020_32cfdce9,
     author = {Yang, Ruihan and Xu, Huazhe and WU, YI and Wang, Xiaolong},
     booktitle = {Advances in Neural Information Processing Systems},
     pages = {4767--4777},
     title = {Multi-Task Reinforcement Learning with Soft Modularization},
     volume = {33},
     year = {2020}
}

@article{vithayathil2020survey,
  title={A survey of multi-task deep reinforcement learning},
  author={Vithayathil Varghese, Nelson and Mahmoud, Qusay H},
  journal={Electronics},
  volume={9},
  number={9},
  pages={1363},
  year={2020},
  publisher={MDPI}
}

@article{guan2023uac,
  title={UAC: Offline reinforcement learning with uncertain action constraint},
  author={Guan, Jiayi and Gu, Shangding and Li, Zhijun and Hou, Jing and Yang, Yiqin and Chen, Guang and Jiang, Changjun},
  journal={IEEE Transactions on Cognitive and Developmental Systems},
  volume={16},
  number={2},
  pages={671--680},
  year={2023},
  publisher={IEEE}
}

@article{kang2024efficient,
  title={Efficient diffusion policies for offline reinforcement learning},
  author={Kang, Bingyi and Ma, Xiao and Du, Chao and Pang, Tianyu and Yan, Shuicheng},
  journal={Advances in Neural Information Processing Systems},
  volume={36},
  year={2024}
}

@inproceedings{espeholt2018impala,
  title={Impala: Scalable distributed deep-rl with importance weighted actor-learner architectures},
  author={Espeholt, Lasse and Soyer, Hubert and Munos, Remi and Simonyan, Karen and Mnih, Vlad and Ward, Tom and Doron, Yotam and Firoiu, Vlad and Harley, Tim and Dunning, Iain and others},
  booktitle={International conference on machine learning},
  pages={1407--1416},
  year={2018},
  organization={PMLR}
}

@article{plappert2018multi,
  title={Multi-goal reinforcement learning: Challenging robotics environments and request for research},
  author={Plappert, Matthias and Andrychowicz, Marcin and Ray, Alex and McGrew, Bob and Baker, Bowen and Powell, Glenn and Schneider, Jonas and Tobin, Josh and Chociej, Maciek and Welinder, Peter and others},
  journal={arXiv preprint arXiv:1802.09464},
  year={2018}
}

@inproceedings{mnih2016asynchronous,
  title={Asynchronous methods for deep reinforcement learning},
  author={Mnih, Volodymyr and Badia, Adria Puigdomenech and Mirza, Mehdi and Graves, Alex and Lillicrap, Timothy and Harley, Tim and Silver, David and Kavukcuoglu, Koray},
  booktitle={International conference on machine learning},
  pages={1928--1937},
  year={2016},
  organization={PMLR}
}

@article{borsa2016learning,
  title={Learning shared representations in multi-task reinforcement learning},
  author={Borsa, Diana and Graepel, Thore and Shawe-Taylor, John},
  journal={arXiv preprint arXiv:1603.02041},
  year={2016}
}

@inproceedings{silver2014deterministic,
  title={Deterministic policy gradient algorithms},
  author={Silver, David and Lever, Guy and Heess, Nicolas and Degris, Thomas and Wierstra, Daan and Riedmiller, Martin},
  booktitle={International conference on machine learning},
  pages={387--395},
  year={2014},
  organization={Pmlr}
}

@article{blei2017variational,
  title={Variational inference: A review for statisticians},
  author={Blei, David M and Kucukelbir, Alp and McAuliffe, Jon D},
  journal={Journal of the American statistical Association},
  volume={112},
  number={518},
  pages={859--877},
  year={2017},
  publisher={Taylor \& Francis}
}

@inproceedings{deramo2020sharing,
  title={Sharing Knowledge in Multi-Task Deep Reinforcement Learning},
  author={DEramo, C and Tateo, D and Bonarini, A and Restelli, M and Peters, J},
  booktitle={Eighth International Conference on Learning Representations (ICLR 2020)},
  year={2020},
  organization={OpenReview. net}
}

@article{song2019generative,
  title={Generative modeling by estimating gradients of the data distribution},
  author={Song, Yang and Ermon, Stefano},
  journal={Advances in neural information processing systems},
  volume={32},
  year={2019}
}

@inproceedings{guan2023voce,
    title={{VOCE}: Variational Optimization with Conservative Estimation for Offline Safe Reinforcement Learning},
    author={Jiayi Guan and Guang Chen and Jiaming Ji and Long Yang and Ao Zhou and Zhijun Li and changjun jiang},
    booktitle={Thirty-seventh Conference on Neural Information Processing Systems},
    year={2023},
    url={https://openreview.net/forum?id=sIU3WujeSl}
}

@article{zhang2023saformer,
  title={Saformer: A conditional sequence modeling approach to offline safe reinforcement learning},
  author={Zhang, Qin and Zhang, Linrui and Xu, Haoran and Shen, Li and Wang, Bowen and Chang, Yongzhe and Wang, Xueqian and Yuan, Bo and Tao, Dacheng},
  journal={arXiv preprint arXiv:2301.12203},
  year={2023}
}

@book{cmdp,
  title={Constrained Markov decision processes: stochastic modeling},
  author={Altman, Eitan},
  year={1999},
  publisher={Routledge}
}
